\newlist{romanenumerate*}{enumerate*}{1}
\setlist[romanenumerate*]{label=(\textit{\roman*})}
\newlist{romanenumerate}{enumerate}{1}
\setlist[romanenumerate]{label=(\textit{\roman*})}
\newtheorem{theorem}{Theorem}
\newtheorem{definition}{Definition}
\newtheorem{guarantees}{Guarantees}
\title{Markov Abstractions for PAC Reinforcement Learning in\\ Non-Markov
Decision Processes}
\author{
  Alessandro Ronca\and
  Gabriel Paludo Licks\and
  Giuseppe De Giacomo\\
  \affiliations
  DIAG, Sapienza University of Rome, Italy
  \emails
  \{ronca, licks, degiacomo\}@diag.uniroma1.it
}
\begin{document}

\maketitle

\begin{abstract}
  Our work aims at developing reinforcement learning algorithms that do not rely
  on the Markov assumption. We consider the class of Non-Markov Decision
  Processes where histories can be abstracted into a finite set of states while
  preserving the dynamics. We call it a \emph{Markov abstraction} since it
  induces a Markov Decision Process over a set of states that encode the
  non-Markov dynamics. This phenomenon underlies the recently introduced Regular
  Decision
  Processes (as well as POMDPs where only a finite number of belief states is
  reachable). In all such kinds of decision process, an agent that uses a Markov
  abstraction can rely on the Markov property to achieve optimal behaviour. We
  show that Markov abstractions can be learned during reinforcement learning.
  Our approach combines automata learning and classic reinforcement learning.
  For these two tasks, standard algorithms can be employed. We show that our
  approach has PAC guarantees when the employed
  algorithms have PAC guarantees, and we also provide an experimental
  evaluation.
\end{abstract}

\section{Introduction}
\label{sec:introduction}

In the classic setting of Reinforcement Learning (RL), the agent is
provided with the current \emph{state} of the environment \cite{suttonbarto}.
States are a useful abstraction for agents, since predictions and decisions can
be made according to the current state.
This is RL under the Markov assumption, or \emph{Markov RL}.
Here we focus on the more realistic \emph{non-Markov RL} setting
\cite{hutter2009feature,brafman2019rdp,toroicarte2019learning,ronca2021efficient}
where the agent is not given the current state, but can observe what happens in
response to its actions. 
The agent can still try to regain the useful abstraction of states. However, now
the abstraction has to be learned, as a map from every \emph{history} observed
by the agent to some \emph{state} invented by the agent \cite{hutter2009feature}.
% Abstractions to a finite state space do not always exist. However, they exist
% in the two notable cases of Regular Decision Processes \cite{brafman2019rdp},
% and POMDPs with a finite number of reachable belief states.

We propose RL agents that learn a specific kind of abstraction called
\emph{Markov abstraction} as part of the overall learning process.
Our approach combines automata learning and Markov RL in a modular manner, with
Markov abstractions acting as an interface between the two modules. A key aspect
of our contribution is to show how the sequence of intermediate automata built
during learning induce partial Markov abstractions that can be readily used to
guide exploration or exploitation. 
Our approach is Probably Approximately Correct (PAC), cf.\
\cite{kearns1994introduction}, whenever the same holds for the
employed automata and Markov RL algorithms.

The idea of solving non-Markov tasks by introducing a Markovian state space 
can already be found in \cite{bacchus1996rewarding} and
more recently in \cite{brafman2018ltlf,toroicarte2018reward}. 
RL in this setting has been
considered \cite{toroicarte2018reward,degiacomo2019bolts,gaon2019rl,xu2020jirp}.
This setting is simpler than ours since transitions are still assumed to be
Markov. The setting where both transitions
and rewards are non-Markov has been considered in
\cite{toroicarte2019learning,brafman2019rdp}, with RL studied in
\cite{toroicarte2019learning,abadi2020learning,ronca2021efficient}.
Such RL techniques are based on automata learning. The approach in
\cite{ronca2021efficient} comes with PAC guarantees, as opposed
to the others which do not.
Our approach extends the techniques in \cite{ronca2021efficient} in order to use
the learned automata not only to construct the final policy, but also to
guide exploration.

Abstractions from histories to states have been studied in 
\cite{hutter2009feature,maillard2011selecting,veness2011montecarlo,nguyen2013competing,lattimore2013sample,hutter2016extreme,majeed2018qlearning}.
\cite{hutter2009feature}
introduces the idea of abstractions from histories to states.  Its algorithmic
solution, as well as the one in \cite{veness2011montecarlo}, is less general
than ours since it assumes a bound on the length of the histories to consider.
This corresponds to a subclass of automata.
\cite{maillard2011selecting} provides a technique to select an abstraction from
a given finite set of candidate abstractions; instead, we consider an infinite
set of abstractions.
\cite{nguyen2013competing}
considers a set of abstractions without committing to a specific way of
representing them. As a consequence, they are not able to take advantage of
specific properties of the chosen representation formalism, in the algorithm nor
in the analysis. On the contrary, we choose automata, which allows us to take
advantage of existing automata-learning techniques, and in particular of their
properties such as the incremental construction.
\cite{lattimore2013sample} studies non-Markov RL in the case
where the non-Markov decision process belongs to a compact class. Their results
do not apply to our case because the class of decision processes admitting a
Markov abstraction is not compact.
\cite{majeed2018qlearning} studies Q-learning with abstractions, but it assumes
that abstractions are given.
\cite{hutter2016extreme} provides a result that we use in Section~3; however,
none of their abstractions is required to preserve the dynamics, as we
require for our Markov abstractions. Even in the case called `exact state
aggregation', their abstractions are only required to preserve rewards, and not
to preserve observations. In this setting, it is unclear whether automata
techniques apply.

Proofs and experimental details are given in the appendix.

\section{Preliminaries}

For $x$ and $z$ strings, $xz$ denotes their concatenation.
For $\Sigma$ and $\Gamma$ alphabets, $\Sigma\Gamma$ denotes the set of all
strings $\sigma\gamma$ with $\sigma \in \Sigma$ and $\gamma \in \Gamma$.
For $f$ and $g$ functions, $fg$ denotes their composition.
We write $f: X \leadsto Y$ to denote a function $f: X \times Y \to [0,1]$ that
defines a probability distribution $f(\cdot|x)$ over $Y$ for every $x \in X$.

\paragraph{Non-Markov Decision Processes.}
A \emph{Non-Markov Decision Process} (NMDP), cf.\ \cite{brafman2019rdp},
is a tuple 
$\mathcal{P} = \langle A, O, R, \terminationsymbol, \transitionfunc, \rewardfunc
\rangle$ with components defined as follows.
$A$ is a finite set of \emph{actions},
$O$ is a finite set of \emph{observations},
$R \subseteq \mathbb{R}_{\geq 0}$ is a finite set of non-negative 
\emph{rewards}, 
$\terminationsymbol$ is a special symbol that denotes \emph{episode
termination}.
Let the elements of $\mathcal{H} = (AOR)^*$ be called \emph{histories}, and
let the elements of $\mathcal{E} = (AOR)^*A\terminationsymbol$ be called
\emph{episodes}.
% The symbol $o$ typically denotes an observation, $a$ an
% action, $r$ a reward, $h$ a history, and $e$ an episode.
Then,
$\transitionfunc: \mathcal{H} \times A \leadsto (O \cup \{ \terminationsymbol
\})$ is the \emph{transition function}, and
$\rewardfunc: \mathcal{H} \times A \times O \leadsto \rewardValues$ is
the \emph{reward function}.
% Note that an episode ends when the termination symbol is issued in response to
% an action.
The transition and reward functions can be combined 
into the \emph{dynamics function} 
$\dynfunc: \mathcal{H} \times A \leadsto (OR \cup \{ \terminationsymbol \})$,
which describes the probability to observe next a certain pair of
observation and reward, or termination, given a certain history and action.
Namely, 
$\dynfunc(or|h,a) = \transitionfunc(o|h,a) \cdot \rewardfunc(r|h,a,o)$ and
$\dynfunc(\terminationsymbol|h,a) = \transitionfunc(\terminationsymbol|h,a)$.
We often write an NDMP directly as
$\langle A, O, R, \terminationsymbol, \dynfunc \rangle$.
A \emph{policy} is a function $\pi: \mathcal{H} \leadsto A$.
% A policy $\pi$ is \emph{deterministic on a history $h$} if $\pi(a|h) = 1$ for
% some action $a$, in which case we write $\pi(h) = a$; and it is
% is \emph{deterministic} if it is deterministic on every history.
The \emph{uniform policy} $\upolicy$ is the policy defined as $\upolicy(a|h) =
1/|A|$ for every $a$ and every $h$.
The \emph{dynamics of $\mathcal{P}$ under a policy $\pi$} describe
the probability of an episode remainder, given the history so far, when
actions are chosen according to a policy $\pi$; it can be recursively computed
as $\dynfunc_\pi(aore | h) = \pi(a|h) \cdot \dynfunc(or|h,a) \cdot
\dynfunc_\pi(e | haor)$, with base case
$\dynfunc_\pi(a\terminationsymbol | h) = \pi(a|h) \cdot
\dynfunc(\terminationsymbol|h,a)$.
% The \emph{dynamics of $\mathcal{P}$ under a policy $\pi$} describe
% the probability of a given episode remainder, given the history so far, when
% actions are chosen according to a policy $\pi$; it can be recursively computed
% as $\dynfunc_\pi(aore | h) = \pi(a|h) \cdot \dynfunc(o,r|h,a) \cdot 
% \dynfunc_\pi(e | haor)$, with base case
% $\dynfunc_\pi(a\terminationsymbol r | h) = \pi(a|h) \cdot
% \dynfunc(\terminationsymbol, r|h,a)$.
%
Since we study episodic reinforcement learning,
we require episodes to terminate with probability one, i.e.,
$\sum_{e \in \mathcal{E}} \dynfunc_\pi(e|\varepsilon) = 1$ for every policy
$\pi$.\footnote{A constant probability $p$ of terminating at every step amounts
  to a discount factor of $1-p$, see \cite{puterman1994markov}.}
This requirement ensures that the following value functions take
a finite value.
The \emph{value of a policy $\pi$ given a history $h$},
written $\valfunc_\pi(h)$, is the expected sum of future rewards when
actions are chosen according to $\pi$ given that the history so far is $h$; it
can be recursively computed as
$\valfunc_\pi(h) = \sum_{aor}
\pi(a|h) \cdot \dynfunc(o,r|h,a) \cdot (r + \valfunc_\pi(haor))$.
The \emph{optimal value given a history $h$} is 
$\valfunc_*(h) = \max_\pi \valfunc_\pi(h)$,
which can be expressed without reference to any policy as 
$\valfunc_*(h) = \max_a \left(
  \sum_{or} \dynfunc(o,r|h,a) \cdot (r +
\valfunc_*(haor)) \right)$.
The \emph{value of an action $a$ under a policy $\pi$ given a history $h$},
written $\avalfunc_\pi(h,a)$, is the expected sum of future rewards
when the next action is $a$ and the following actions are chosen according to
$\pi$, given that the history so far is $h$; it is
$\avalfunc_\pi(h,a) = \sum_{or} 
\dynfunc(o,r|h,a) \cdot (r + \valfunc_\pi(haor))$.
The \emph{optimal value of an action $a$ given a history $h$} is 
$\avalfunc_*(h,a) = \max_\pi \avalfunc_\pi(h,a)$, and it can be expressed as 
$\avalfunc_*(h,a) = \sum_{or} 
\dynfunc(o,r|h,a) \cdot (r + \valfunc_*(haor))$.
A policy $\pi$ is \emph{optimal} if 
$\valfunc_\pi(\varepsilon) = \valfunc_*(\varepsilon)$. 
For $\epsilon > 0$, a policy $\pi$ is \emph{$\epsilon$-optimal} if
$\valfunc_\pi(\varepsilon) \geq \valfunc_*(\varepsilon) - \epsilon$.
% \emph{Near-optimal} means $\epsilon$-optimal for some $\epsilon > 0$.

% \paragraph{Regular Decision Processes.}
% A \emph{Regular Decision Process (RDP)} \cite{brafman2019rdp}\footnote{In
%   \cite{brafman2019rdp} the functions $\transitionfunc$ and
% $\rewardfunc$ are represented using the temporal
% logics on finite traces $\textsc{ldl}_f$. Here instead we use directly finite
% transducers to express them. Note that all  $\transitionfunc$ and
% $\rewardfunc$ representable in $\textsc{ldl}_f$ are indeed expressible through
% finite transducers.} is an NMDP
% $\rdp = \langle A, S, R, \transitionfunc, \rewardfunc, L \rangle$ 
% whose transition and reward functions can be
% represented by \emph{finite transducers}.
% Specifically, there is a finite transducer that,
% on every history $h$, outputs the function
% $\transitionfunc_h: A \times S \to [0,1]$ induced by $\transitionfunc$
% when its first argument is $h$; and there is
% a finite transducer that,
% on every history $h$, outputs the function
% $\rewardfunc_h: A \times S \times R \to [0,1]$ induced by $\rewardfunc$
% when its first argument is $h$.
% Note that the cross-product of such transducers yields a finite transducer for
% the dynamics function $\dynfunc$ of $\rdp$.

\paragraph{Markov Decision Processes.}
A \emph{Markov Decision Process (MDP)} 
\cite{bellman1957markovian,puterman1994markov} is a decision process where
the transition and reward functions (and hence the
dynamics function) depend only on the last observation in the history, taken to
be an arbitrary observation when the history is empty.
Thus, an observation is a complete description of the state of affairs, and
it is customarily called a \emph{state} to emphasise this aspect. Hence, we talk
about a set $S$ of states in place of a set $O$ of observations.
All history-dependent functions---e.g., transition and reward functions,
dynamics, value functions, policies---can be seen as taking a single state in
place of a history.

\paragraph{Episodic RL.}
Given a decision process $\rdp$ and a required accuracy $\epsilon > 0$,
Episodic Reinforcement Learning (RL) for $\mathcal{P}$ and $\epsilon$ is the
problem of an agent that
has to learn an $\epsilon$-optimal policy for $\mathcal{P}$ from the data it
collects by interacting with the environment.
The interaction consists in the agent iteratively performing an action and
receiving an observation and a reward in response, until episode termination.
Specifically, at step $i$, the agent performs an action $a_i$,
receiving a pair of an observation $o_i$ and a reward $r_i$, or the termination
symbol $\terminationsymbol$,  
according to the dynamics of the decision process $\mathcal{P}$.
This process generates an episode of the form 
$a_1o_1r_1a_2o_2r_2 \dots a_n \terminationsymbol$. 
The collection of such episodes is the data available to the agent for learning.

\paragraph{Complexity Functions.}
Describing the behaviour of a learning algorithm requires
a way to measure the complexity of the instance $\mathcal{I}$ to be learned, in
terms of a set of numeric parameters that reasonably describe the complexity of
the instance---e.g., number of states of an MDP.
Thus, for every instance $\mathcal{I}$, we have a list 
$\vec{c}_\mathcal{I}$ of values for the parameters.
We use the complexity description to state properties for the algorithm by
means of complexity functions. A \emph{complexity function} $f$ is an
integer-valued non-negative function of the complexity parameters and possibly
of other parameters. Two distinguished parameters are accuracy
$\epsilon > 0$, and probability of failure $\delta \in (0,1)$. 
The dependency of complexity functions on these parameters is of the form
$1/\epsilon$ and $\ln(1/\delta)$.
When writing complexity functions, we will hide the specific choice of
complexity parameters by writing $\mathcal{I}$ instead of $\vec{c}_\mathcal{I}$,
and we will write $\epsilon$ and $\delta$ for $1/\epsilon$ and $\ln(1/\delta)$.
For instance, we will write $f(\mathcal{I},\epsilon,\delta)$ in place of
$f(\vec{c}_\mathcal{I},1/\epsilon,\ln(1/\delta))$.

\paragraph{Automata.}
A \emph{Probabilistic Deterministic Finite Automaton (PDFA)}, cf.\
\cite{balle2014adaptively}, is a tuple 
$\atm = \langle Q, \Sigma, \Gamma, \tau, \lambda, q_0 \rangle$
where: $Q$ is a finite set of states;
$\Sigma$ is a finite alphabet;
$\Gamma$ is a finite set of termination symbols not in $\Sigma$;
$\tau: Q \times \Sigma \to Q$ is the (deterministic) transition function;
$\lambda: Q \times (\Sigma \cup \Gamma) \to [0,1]$ is the
probability function, which defines a probability distribution
$\lambda(\cdot|q)$ over $\Sigma \cup \Gamma$ for every
state $q \in Q$;
$q_0 \in Q$ is the initial state.
The iterated transition function $\tau^*: Q \times \Sigma^* \to Q$  is
recursively defined as
$\tau^*(q,\sigma_1 \dots \sigma_n) = 
\tau^*(\tau(q,\sigma_1), \sigma_2 \dots \sigma_n)$
with base case $\tau^*(q,\varepsilon) = q$; 
furthermore, $\tau^*(w)$ denotes $\tau^*(q_0,w)$.
It is required that,
% \begin{enumerate}
% \item
%   for every $q \in Q$, every $\gamma \in \Gamma$, and every $\sigma \in \Sigma$,
%   $\lambda(q,\gamma,\sigma) = 0$ if $\tau(q,\gamma,\sigma)$ is not defined;
% \item
%   for every $q \in Q$ and every $\gamma \in \Gamma$,
%   $\sum_{z \in (\Sigma \cup \{\terminationsymbol \})}
%   \lambda(q,\gamma,\sigma) = 1$ ;
% \item 
for every state $q \in Q$, there exists a sequence $\sigma_1, \dots,
\sigma_n$ of symbols of $\Sigma$ such that $\lambda(\tau^*(q,\sigma_1 \dots
\sigma_{i-1}),\sigma_i) > 0$ for every $i \in [1,n]$, and
$\lambda(\tau(q,\sigma_1 \dots \sigma_n),\gamma) > 0$ for $\gamma \in
\Gamma$---to ensure that every string terminates with probability one.
% \end{enumerate}
Given a string $x \in \Sigma^*$, the automaton represents the probability
distribution $\atm(\cdot|x)$ over $\Sigma^*\Gamma$ defined
recursively as 
$\atm(\sigma y | x) = \lambda(\tau(x),\sigma) \cdot \atm(y | x \sigma)$ with
base case $\atm(\gamma | x) = \lambda(\tau(x), \gamma)$ for $\gamma \in \Gamma$.

\section{Markov Abstractions}

Operating directly on histories is not desirable.
There are exponentially-many histories in the episode length, and typically each
history is observed few times, which does not allow for computing accurate
statistics.
A solution to this problem is to abstract histories to a reasonably-sized set of
states while preserving the dynamics.
We fix an NMDP 
$\mathcal{P} = \langle A, O, R, \terminationsymbol, \dynfunc \rangle$.
\begin{definition}
  A Markov abstraction over a finite set of states $S$ is a function $\alpha:
  \mathcal{H} \to S$ such that,
  for every two histories $h_1$ and $h_2$,
  $\abstracfunc(h_1) = \abstracfunc(h_2)$ implies
  $\dynfunc(\cdot|h_1,a) = \dynfunc(\cdot|h_2,a)$ for every action $a$.
\end{definition}
Given an abstraction $\alpha$, its repeated application $\alpha^*$ transforms a
given history by replacing observations by the corresponding states as follows:
\begin{align*}
  & \alpha^*(a_1 o_1 r_1 \dots a_n o_n r_n) = s_0 a_1 s_1 r_1 \dots a_n s_n r_n,
  \\
  &\text{where }\, s_i = \alpha(h_i) \text{ and } h_i = a_1 o_1 r_1 \dots
  a_i o_ir_i.
\end{align*}
% Note that $s_0$ is the state for the empty history.
Now consider the probability $P_\alpha(s_i,r_i|h_{i-1},a_i)$ obtained by
marginalisation as:
\begin{equation*}
  P_\alpha(s_i,r_i|h_{i-1},a_i) = \sum_{o: \alpha(h_{i-1}a_io) = s_i}
  \dynfunc(o,r_i|h_{i-1},a_i).
\end{equation*}
Since the dynamics $\dynfunc(o,r_i|h_{i-1},a_i)$ are the same for every history
mapped to the same state, there is an MDP $\mathcal{M}_\rdp^\alpha$ with dynamics
$\dynfunc^\alpha$ such that 
$P_\alpha(s_ir_i|h_{i-1},a_i) = \dynfunc^\alpha(s_ir_i|\alpha(h_{i-1}),a_i)$.
The \emph{induced MDP} $\mathcal{M}_\rdp^\alpha$ can be solved in place of
$\rdp$.
Indeed, the value of an action in a state is the value of the action in any of
the histories mapped to that state [Hutter, 2016, Theorem~1].
In particular, if $\pi$ is an $\epsilon$-optimal policy for
$\mathcal{M}_\rdp^\alpha$, then $\pi\alpha$ is an $\epsilon$-optimal policy for
$\rdp$.

\subsection{Related Classes of Decision Processes}

We discuss how Markov abstractions relate to existing classes of decision
processes.

\paragraph{MDPs.}
MDPs can be characterised as the class of NMDPs where histories can be
abstracted into their last observation. Namely, they admit
$\abstracfunc(haor) = o$ as a Markov abstraction.

\paragraph{RDPs.}
A \emph{Regular Decision Process (RDP)} can be defined in terms of
the temporal logic on finite traces $\textsc{ldl}_f$ \cite{brafman2019rdp} or in
terms of finite transducers \cite{ronca2021efficient}.
The former case reduces to the latter by the well-known correspondence between
$\textsc{ldl}_f$ and finite automata.
In terms of finite transducers,
an RDP is an NMDP
$\rdp = \langle A, O, R, \terminationsymbol, \dynfunc \rangle$ whose dynamics
function can be represented by a finite transducer $T$ that,
on every history $h$, outputs the function
$\dynfunc_h: A \leadsto (OR \cup \{ \terminationsymbol \})$
induced by $\dynfunc$ when its first argument is $h$.
Here we observe that the iterated version of the transition function of $T$ is
a Markov abstraction.

\paragraph{POMDPs.}
A \emph{Partially-Observable Markov Decision Process} (POMDP), cf.\
\cite{kaelbling1998pomdp}, is a tuple
$\mathcal{P} = \langle A, O, R, \terminationsymbol, X, \transitionfunc,
  \rewardfunc, \obsfunc, x_0 \rangle$ where:
$A$, $O$, $R$, $\terminationsymbol$ are as in an NMDP;
$X$ is a finite set of \emph{hidden states};
$\transitionfunc: X \times A \leadsto X$ is the \emph{transition
function};
$\rewardfunc: X \times A \times O \leadsto R$ is the \emph{reward function};
$\obsfunc: X \times A \times \leadsto (O \cup \{ \terminationsymbol \})$ is the
\emph{observation function};
$x_0 \in X$ is the initial hidden state.
To define the dynamics function---i.e.,
the function that describes the probability to observe next a certain pair of
observation and reward, or termination, given a certain history of observations
and action---it requires to introduce the \emph{belief function}
$\believefunc: \mathcal{H} \leadsto X$, which describes the probability of being
in a certain hidden state given the current history.
% The belief function can be recursively defined in terms of the transition and
% observation functions:
% \begin{equation*}
%   \believefunc(x|haor) = \eta(haor) \cdot
%   \obsfunc(o|x,a) \cdot \rewardfunc(r|x,a,o) \cdot \textstyle\sum_{x'}
%   \transitionfunc(x|x',a) \cdot \believefunc(x'|h),
% \end{equation*}
% where $\eta(haor)$ is a normalising factor that is independent of $x$, and the
% base case is $\believefunc(x_0|\varepsilon) = 1$.
Then,
the dynamics function can be expressed in terms of the belief function as 
$\dynfunc(or|h,a) = \sum_x \believefunc(x|h) \cdot \obsfunc(o|x,a) \cdot
\rewardfunc(r|x,a,o)$ and $\dynfunc(\terminationsymbol|h,a) = \sum_x
\believefunc(x|h) \cdot \obsfunc(\terminationsymbol|x,a)$. Policies and value
functions, and hence the notion of optimality, are as for NMDPs.
For each history $h$, the probability distribution $\believefunc(\cdot|h)$ over
the hidden states is called a \emph{belief state}.
We note the following property of POMDPs.

\begin{restatable}{theorem}{theorempomdp} \label{theorem:pomdp}
  If a POMDP has a finite set of reachable belief states, then the function that
  maps every history to its belief state is a Markov abstraction.
\end{restatable}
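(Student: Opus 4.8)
The plan is to exhibit the belief function itself as the desired Markov abstraction. Concretely, I would take $S$ to be the set of reachable belief states, which is finite by hypothesis, and define $\abstracfunc(h) = \believefunc(\cdot \mid h)$, so that the abstraction sends each history to the probability distribution over hidden states that it induces. Since $S$ is finite, this already meets the cardinality requirement in the definition of a Markov abstraction, so it remains only to verify the dynamics-preservation property.

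The core of the argument is a direct inspection of the POMDP dynamics formula. Suppose $\abstracfunc(h_1) = \abstracfunc(h_2)$, which by definition means $\believefunc(x \mid h_1) = \believefunc(x \mid h_2)$ for every hidden state $x \in X$. I would then substitute these equal belief states into $\dynfunc(or \mid h, a) = \sum_x \believefunc(x \mid h)\, \obsfunc(o \mid x, a)\, \rewardfunc(r \mid x, a, o)$ and into the analogous expression for $\dynfunc(\terminationsymbol \mid h, a)$. Because $h$ enters each right-hand side only through the factor $\believefunc(x \mid h)$, while $\obsfunc$ and $\rewardfunc$ depend only on $x$, $a$, $o$ and not on $h$, the two histories yield identical values for every $a$, $o$, $r$ and for termination. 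Hence $\dynfunc(\cdot \mid h_1, a) = \dynfunc(\cdot \mid h_2, a)$ for every action $a$, which is exactly the Markov abstraction condition. It is worth stressing that this uses only the static dependence of the dynamics on the belief state; we do not need the belief-update recursion to be consistent or Markovian, nor any transition structure on $S$.

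The only point requiring care is the status of histories that are unreachable, i.e.\ that occur with probability zero, for which the recursively defined belief update can be degenerate or undefined. Since such histories play no role along any trajectory, I would either restrict attention to reachable histories or fix a canonical reachable belief state to which all unreachable histories are mapped; in neither case does this affect the substitution above. I expect this bookkeeping, rather than the substitution, to be the only mildly delicate part, as the dynamics-preservation claim is essentially immediate from the definitional form of the POMDP dynamics.
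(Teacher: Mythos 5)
Your proposal is correct and follows essentially the same route as the paper's own proof: take $\alpha(h) = \believefunc(\cdot\,|\,h)$, note that finiteness of the reachable belief states gives a finite abstraction codomain, and observe from the expression $\dynfunc(or\,|\,h,a) = \sum_x \believefunc(x\,|\,h)\,\obsfunc(o\,|\,x,a)\,\rewardfunc(r\,|\,x,a,o)$ (and its termination analogue) that equal beliefs force equal dynamics. Your extra remark about zero-probability histories is harmless bookkeeping the paper leaves implicit.
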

\begin{proof}[Proof sketch]
  The function $\alpha(h) = \believefunc(\cdot|h)$ is a Markov abstraction, as
  it can be verified by inspecting the expression of the dynamics function of a
  POMDP given above.
  Specifically, $\alpha(h_1) = \alpha(h_2)$ implies $\believefunc(\cdot|h_1) =
  \believefunc(\cdot|h_2)$, and hence $\dynfunc(\cdot|h_1,a) =
  \dynfunc(\cdot|h_2,a)$ for every action $a$.
\end{proof}

\section{Our Approach to Non-Markov RL}

\begin{figure}[t]
\centering
\vspace{0.1cm}
\includegraphics[width=0.48\textwidth]{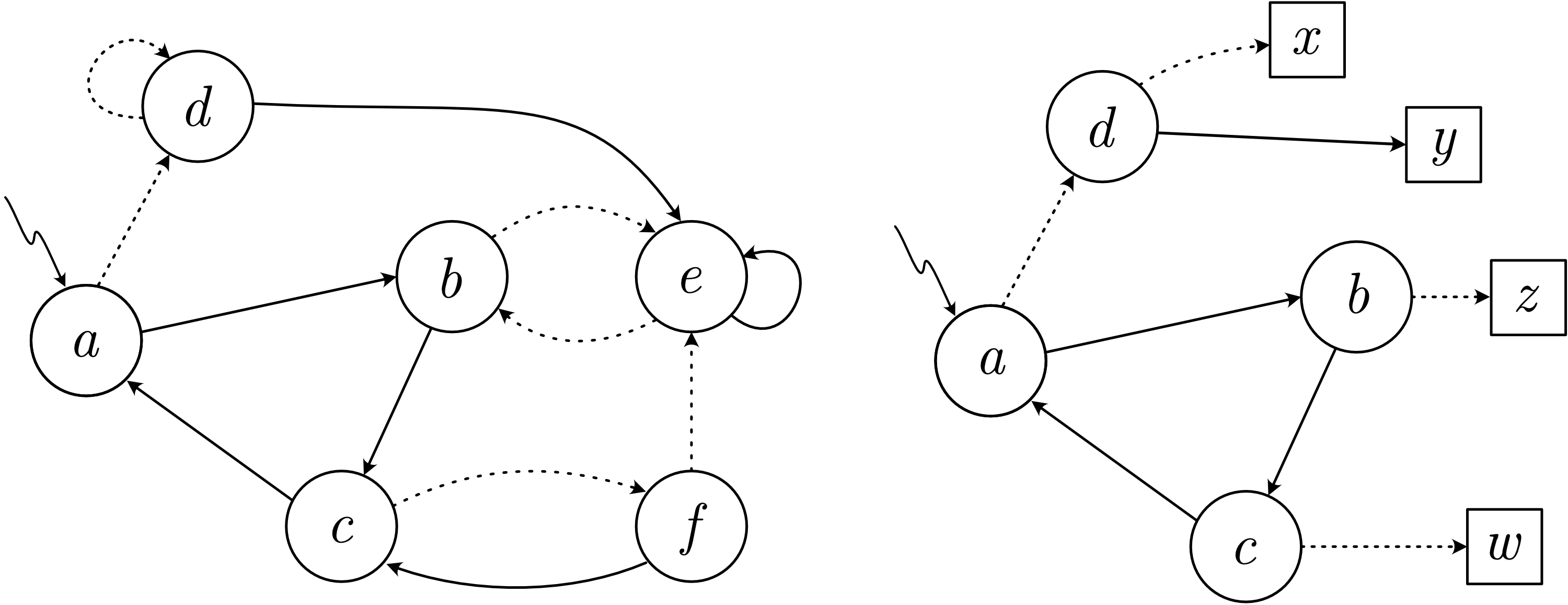}
\caption{Target automaton (left), and hypothesis automaton (right). Solid
edges denote transitions for input $1$, and dashed edges for $0$.}
\label{figure:automata}
\end{figure}

Our approach combines automata learning with Markov RL.
We first describe the two modules separately, and then present the RL algorithm
that combines them.
For the section we fix an NMDP 
$\mathcal{P} = \langle A, O, R, \terminationsymbol, \dynfunc \rangle$, and
assume that it admits a Markov abstraction $\alpha$ on states $S$.

\subsection{First Module: Automata Learning}

Markov abstractions can be learned via automata learning due to the following
theorem.
\begin{restatable}{theorem}{thabstractionautomata} 
  \label{th:abstraction-automata}
  There exist a transition function $\tau: S \times AOR \to S$ and an initial
  state $s_0$ such that, for
  every Markov policy $\pi$ on $S$, the dynamics $\dynfunc_{\pi\alpha}$ of\/
  $\rdp$ are represented by an automaton
  $\langle S, AOR, A\zeta, \tau, \lambda, s_0 \rangle$ for some probability
  function $\lambda$.
  Furthermore, $\tau^* = \alpha$.
\end{restatable}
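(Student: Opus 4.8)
The plan is to read each history directly as the input string of the automaton, so that after reading $h \in (AOR)^*$ the automaton sits in state $\alpha(h)$, and its one-step output distribution reproduces the policy-weighted dynamics; the termination symbols $A\terminationsymbol$ (the alphabet $A\zeta$ of the statement) then extend $h$ to a full episode, so that $\atm(\cdot\mid h)$ ranges over exactly the episode remainders $(AOR)^*A\terminationsymbol$ on which $\dynfunc_{\pi\alpha}(\cdot\mid h)$ is defined. Concretely, I would set $s_0 = \alpha(\varepsilon)$ and define the transition function by $\tau(\alpha(h),aor) = \alpha(h\,aor)$ for every history $h$ and every triple $aor \in AOR$ (on states outside the image of $\alpha$, which are unreachable, $\tau$ and $\lambda$ may be chosen arbitrarily).

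The first and most delicate step is to check that this $\tau$ is well defined, i.e.\ that its value depends on $h$ only through $\alpha(h)$. This forces $\alpha$ to behave as a \emph{right congruence}: $\alpha(h_1) = \alpha(h_2)$ must imply $\alpha(h_1\,aor) = \alpha(h_2\,aor)$ for every $aor$. I expect this to be the crux of the proof. The bare condition in the definition of a Markov abstraction constrains only the current dynamics $\dynfunc(\cdot\mid h,a)$, not the state reached after appending one symbol, so the argument has to promote dynamics preservation to the level of one-symbol extensions; this congruence is precisely what lets the transition function live on states rather than on histories. Once it is in hand, $\tau^* = \alpha$ follows by a routine induction on $|h|$: the base case is $\tau^*(\varepsilon) = s_0 = \alpha(\varepsilon)$, and the step is $\tau^*(h\,aor) = \tau(\tau^*(h),aor) = \tau(\alpha(h),aor) = \alpha(h\,aor)$.

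Next I would fix a Markov policy $\pi$ on $S$ and define the probability function by $\lambda(s,aor) = \pi(a\mid s)\,\dynfunc(or\mid h,a)$ on $AOR$ and $\lambda(s,a\terminationsymbol) = \pi(a\mid s)\,\dynfunc(\terminationsymbol\mid h,a)$ on $A\terminationsymbol$, where $h$ is any history with $\alpha(h) = s$. This is well defined exactly because $\alpha$ is a Markov abstraction: $\dynfunc(\cdot\mid h,a)$ is identical for all histories with image $s$. A one-line computation confirms $\lambda(\cdot\mid s)$ is a distribution, since $\sum_a \pi(a\mid s)\bigl(\sum_{or}\dynfunc(or\mid h,a) + \dynfunc(\terminationsymbol\mid h,a)\bigr) = \sum_a \pi(a\mid s) = 1$; and the PDFA reachability/termination requirement transfers from the NMDP assumption that episodes terminate with probability one, which forces a positive-probability terminating continuation out of every (reachable) state.

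It then remains to verify $\atm(e\mid h) = \dynfunc_{\pi\alpha}(e\mid h)$ for every history $h$ and episode remainder $e$, which I would prove by induction on $|e|$, matching the two recursions term by term. Using $\tau^* = \alpha$, the automaton gives $\atm(aor\,y\mid h) = \lambda(\alpha(h),aor)\,\atm(y\mid h\,aor) = \pi(a\mid\alpha(h))\,\dynfunc(or\mid h,a)\,\atm(y\mid h\,aor)$, which coincides with $\dynfunc_{\pi\alpha}(aor\,e'\mid h) = \pi(a\mid\alpha(h))\,\dynfunc(or\mid h,a)\,\dynfunc_{\pi\alpha}(e'\mid h\,aor)$ by the inductive hypothesis and the fact that $\pi\alpha(a\mid h) = \pi(a\mid\alpha(h))$; the base case $\atm(a\terminationsymbol\mid h) = \pi(a\mid\alpha(h))\,\dynfunc(\terminationsymbol\mid h,a) = \dynfunc_{\pi\alpha}(a\terminationsymbol\mid h)$ is immediate from the definition of $\lambda$. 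Everything after the well-definedness of $\tau$ is bookkeeping, so that congruence step is where I would concentrate the real effort.
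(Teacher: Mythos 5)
Your construction is, in substance, the paper's own: the same initial state $s_0 = \alpha(\varepsilon)$, the same probability function $\lambda(s,aor) = \pi(a|s)\cdot\dynfunc(or|h,a)$ with $h$ any $\alpha$-preimage of $s$ (well defined exactly by the Markov-abstraction property, as you note), and the same induction on the length of the episode remainder, matching the two recursions term by term. The one divergence is how $\tau$ is introduced: the paper sets $\tau(s,aor) = \alpha(h_s\,aor)$ for an arbitrarily fixed representative $h_s$ with $\alpha(h_s) = s$, so that $\tau$ is well defined by fiat, and then asserts ``clearly $\tau^* = \alpha$''; you instead define $\tau(\alpha(h),aor) = \alpha(h\,aor)$ and must establish well-definedness up front. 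These are the same obligation in different clothing: both reduce to $\alpha$ being a right congruence, i.e.\ $\alpha(h_1) = \alpha(h_2)$ implying $\alpha(h_1\,aor) = \alpha(h_2\,aor)$.

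You correctly identify this congruence as the crux, but you never discharge it---and in fact it cannot be discharged from the stated definition, because that definition constrains only the one-step dynamics. Consider an NMDP whose dynamics are history-independent (with, say, a constant termination probability per step): then \emph{every} map $\alpha: \mathcal{H} \to S$ is a Markov abstraction, including one that sends a single designated history of length two to $s_1$ and all other histories to $s_0$, which violates the congruence since $\alpha(\varepsilon) = \alpha(aor)$ while $\alpha(\varepsilon\, aor) \neq \alpha(aor\,aor)$. For such an $\alpha$ no deterministic transition function can satisfy $\tau^* = \alpha$, because any iterated transition function is automatically a right congruence; so the hoped-for ``promotion'' of dynamics preservation to one-symbol extensions is impossible in general, and the congruence must come from outside the definition. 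To be fair, the paper's proof elides the same point---its unproved ``clearly $\tau^* = \alpha$'' is precisely where the congruence is consumed---and the property does hold in all of the paper's motivating instances (iterated transducer transition functions for RDPs, belief-state maps for POMDPs), which is evidently the intended reading. Your diagnosis of where the real content lies is therefore sharp, but as written the proposal has a genuine gap at exactly the step you flagged; closing it requires either restricting to abstractions that are right congruences (effectively strengthening the definition) or following the paper in treating that property as given.
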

\begin{proof}[Proof sketch]
  The start state is $s_0 = \alpha(\varepsilon)$.
  The transition function is defined as $\tau(s,aor) = \alpha(h_saor)$ where
  $h_s$ is an arbitrarily chosen history such that $\alpha(h_s) = s$.
  Clearly $\tau^* = \alpha$.
  Then, the probability function is defined as
  $\lambda(s,aor) = \pi(a|s) \cdot \dynfunc(or|h_sa)$ and
  $\lambda(s,a\stopaction) = \pi(a|s) \cdot \dynfunc(\stopaction|h_sa)$.
  It can be shown by induction that the resulting automaton represents
  $\dynfunc_{\pi\alpha}$ regardless of the
  choice of the representative histories $h_s$, since all histories mapped to
  the same state determine the same dynamics function.
\end{proof}

We present an informal description of a generic PDFA-learning algorithm,
capturing the essential features of the algorithms in
\cite{ron1998learnability,clark2004pac,palmer2007pac,balle2013pdfa,balle2014adaptively}.
We will highlight the characteristics that have an impact on the rest of the
RL algorithm.
To help the presentation, consider Figure~\ref{figure:automata}.
The figure shows the transition graph of a target automaton (left), and
the \emph{hypothesis graph} built so far by the algorithm (right).
In the hypothesis graph we distinguish \emph{safe} and \emph{candidate} nodes.
Safe nodes are circles in the figure. They are in a one-to-one correspondence
with nodes in the target, they have all transitions defined, and they are not
going to change.
Candidate nodes are squares in the figure.
Their transitions are not defined, and hence they form the frontier of the
learned part of the graph.
The graph is extended by \emph{promoting} or \emph{merging} a candidate.
If the algorithm tests that a candidate node is distinct from every other
safe state in the hypothesis graph, then it is promoted to safe. 
Upon promotion, a candidate is added for each possible
transition from the just-promoted safe node. This effectively extends the
automaton by pushing the frontier.
If the algorithm tests that a candidate is equivalent to a safe node
already in the hypothesis, then the candidate is merged into the safe node.
The merge amounts to deleting the candidate node and redirecting all its 
incoming edges to the safe node.
Thus, the statistical core of the algorithm consists in the tests. The test
between two nodes is based on the input strings having prefixes that map to the
two nodes respectively. A sufficient number of strings yields a good accuracy of
the test. Assuming that all tests yield the correct result, it is easy to see
that every new hypothesis is closer to the target.
We focus on the approach in \cite{balle2014adaptively}, which has the following
guarantees.

\begin{guarantees} \label{guarantees:automata}
  There are complexity functions $K,N,T_\mathrm{s}$ such that, for every
  $\delta$ and every target automaton $\mathcal{A}$,
  the automata learning algorithm builds a
  sequence of hypotheses $\atm_1, \dots, \atm_n$ satisfying the 
  following conditions:
  \emph{(soundness)}~with probability at least $1-\delta$, 
  for every $i \in [1,n]$, there is a transition-preserving bijection between
  the safe states of $\atm_i$  and a subset of states of the target $\atm$;
  \emph{(incrementality)}~for every $i \in [2,n]$, the hypothesis $\atm_i$
  contains all safe states and all transitions between safe states of
  $\atm_{i-1}$;
  \emph{(liveness)}~every
  candidate $s$ is either promoted or merged within reading an expected number 
  $K(\mathcal{A},\delta)$ of strings having a prefix that maps to
  $s$;
  \emph{(boundedness)}~the number $n$ of hypotheses is at most $N(\mathcal{A})$;
  \emph{(computational cost)}~every string $x$ is processed in time 
  $T_\mathrm{s}(\mathcal{A},\delta, |x|)$.
\end{guarantees}

We will ensure that statistics for a state $q$ are no longer updated once it is
promoted to safe. This preserves the guarantees, and makes the
algorithm insensitive to changes in the distribution $\lambda(q',\cdot)$ of the
target state $q'$ corresponding to $q$ that take place after $q$ is promoted.

Since a hypothesis automaton $\atm_i$ contains candidate states, that do not have 
outgoing transitions, the Markov abstraction $\alpha_i$ obtained as the
iteration $\tau_i^*$ of its transition function is not a complete Markov
abstraction, but a partial one.
As a consequence, the MDP $\mathcal{M}_i$ induced by $\alpha_i$ is also partial.
Specifically, it contains states from which one cannot proceed, the ones
deriving from candidate nodes.

\subsection{Second Module: Markov RL}

When a Markov RL agent is confined to operate in a subset $S'$ of the states $S$
of an MDP, it can always achieve one of two mutually-exclusive results.
The first result is that it can find a policy that is near-optimal for the
entire MDP, ignoring the states not in $S'$.
Otherwise, there is a policy that leads to a state not in $S'$ sufficiently
often.
This is the essence of the Explore or Exploit lemma from \cite{kearns2002near}.
The property is used explicitly in the $E^3$ algorithm \cite{kearns2002near}, 
and more implicitly in algorithms such as RMax \cite{brafman2002rmax}.
Indeed, these algorithms can be employed to satisfy the following guarantees.

\begin{guarantees} \label{guarantees:rl}
  There are complexity functions $E,T_\mathrm{e}$ such that, for every
  $\epsilon$, every $\delta$, and every MDP $\mathcal{M}$ where the agent is
  restricted to operate in a subset $S'$ the states, the two following
  conditions hold:
  \emph{(explore or exploit)}~within
  $E(\mathcal{M},\epsilon,\delta)$ episodes,
  with probability at least $1-\delta$,
  either the agent finds an $\epsilon$-optimal policy or there is a state not in
  $S'$ that is visited at least once;
  \emph{(computational cost)}~every episode $e$ is processed in time
  $T_\mathrm{e}(\mathcal{M},\epsilon,\delta, |e|)$.
\end{guarantees}

\subsection{Overall Approach}

\begin{algorithm}[t]
  \SetKw{Or}{or}%
  \SetKw{And}{and}%
  \SetKw{Yield}{yield}%
  \SetKwFor{Loop}{loop}{}{}%
  \SetKwFor{Until}{until}{do}{}%
  \SetKwInput{Input}{Input}
  \SetKwInput{OInput}{Optional}
  \SetKwInput{Output}{Output}
  \SetKwInput{Parameters}{Parameters}
  \SetKwProg{Fn}{function}{}{}
  $\alpha \leftarrow \emptyset$\;
  \Loop{}{
    $h \leftarrow \varepsilon$\; 
      \While{$\alpha(h)$ is safe \And episode is not over}{
        $a \leftarrow \mathtt{MarkovRL.choose}()$\;
        $o,r \leftarrow$ perform action $a$\;
        $\mathtt{MarkovRL.observe}(\alpha(h),a,r,\alpha(haor))$\;
        $h \leftarrow h aor$;
      }
    \While{episode is not over}{
      $a \leftarrow$ pick an action according to $\pi_\mathrm{u}$\;
      $o,r \leftarrow$ perform action $a$\;
      $h \leftarrow h aor$; 
    }
    $\alpha \leftarrow \mathtt{AutomataLearning.consume}(h)$\;
    $\mathtt{MarkovRL.update}(\alpha)$\;
  }
  \caption{\texttt{NonMarkovRL}}
  \label{alg:abstraction-module}
\end{algorithm}

We will have a Markov agent operating in the MDP induced by the current
partial abstraction, in order to either find a near-optimal policy (when the
abstraction is sufficiently complete) or to visit candidate states (to collect
strings to extend the abstraction).
Concretely, we propose Algorithm~\ref{alg:abstraction-module}, that employs
the modules $\mathtt{AutomataLearning}$ and $\mathtt{MarkovRL}$ to solve
non-Markov RL.
It starts with an empty abstraction $\alpha$ (Line~1), and then it loops,
processing one episode per iteration.
Line~3 corresponds to the beginning of an episode, and hence
the algorithm sets the current history to the empty history.
Lines~4--8 process the prefix of the episode that maps to the safe
states; on this prefix, the Markov RL agent is employed.
Specifically, the agent is first queried for the action to take (Line~5),
the action is then executed (Line~6), the resulting transition is shown to the
agent (Line~7), and the current history is extended (Line~8).
Lines~9--12 process the remainder of the episode, for which we do not
have an abstraction yet.
First, an action is chosen according to the uniform policy $\pi_\mathrm{u}$
(Line~10), the action is then executed (Line~11), and the current history is
extended (Line~12).
Lines~13--14 process the episode that has just been generated, before moving to
the next episode.
In Line~13 the automata learning algorithm is given the episode, and it returns
the possibly-updated abstraction $\alpha$.
Finally, in Line~14, the Markov agent is given the latest abstraction, and it
has to update its model and/or statistics to reflect any change in the
abstraction. We assume a naive implementation of the update function, that
amounts to resetting the Markov agent when the given abstraction is different
from the previous one. 

The algorithm has PAC guarantees assuming PAC guarantees for the employed
modules.
In particular, let $K,N,T_\mathrm{s}$ be as in
Guarantees~\ref{guarantees:automata}, and let $E,T_\mathrm{e}$ be as in
Guarantees~\ref{guarantees:rl}.
In the context of Algorithm~1, the target automaton is the automaton
$\mathcal{A}^\mathrm{u}_\rdp$ that represents the dynamics of $\rdp$
under the uniform policy $\upolicy$, and the MDP the Markov agent interacts
with is the MDP $\mathcal{M}^\alpha_\rdp$ induced by $\alpha$.
Furthermore, let $L_\rdp$ be the expected episode length for $\rdp$.

\begin{theorem}
  For any given $\epsilon$ and $\delta$, and for any NMDP $\rdp$ 
  admitting a Markov abstraction~$\alpha$,
  Algorithm~1 has probability at least $1-\delta$ of solving the Episodic RL
  problem for $\rdp$ and~$\epsilon$, using a number of episodes
  $$O\big(N(\mathcal{A}^\mathrm{u}_\rdp) \cdot
       \left\lceil
      E(\mathcal{M}_\rdp^\alpha,\epsilon,\delta')/K(\mathcal{A}^\mathrm{u}_\rdp,\delta/2)
  \right\rceil\big)$$
    with $\delta' = \delta/(2 \cdot N(\mathcal{A}^\mathrm{u}_\rdp))$,
    and a number of computation steps that is proportional to the number of
    episodes by a quantity
    $$
    O\big(T_\mathrm{s}(\mathcal{A}^\mathrm{u}_\rdp,\delta/2,L_\rdp) + 
    T_\mathrm{e}(\mathcal{M}_\rdp^\alpha,\epsilon,\delta',L_\rdp)\big).
    $$
\end{theorem}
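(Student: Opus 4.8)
The plan is to split the failure budget $\delta$ into two halves---one for the automata-learning module and one for the Markov-RL module---and then to analyse the algorithm as a sequence of at most $N(\mathcal{A}^\mathrm{u}_\rdp)$ phases, one per distinct hypothesis automaton. First I would invoke the soundness condition of Guarantees~\ref{guarantees:automata} with parameter $\delta/2$: conditioned on a $(1-\delta/2)$-probability event, in every phase the safe states of the current hypothesis are in transition-preserving bijection with a subset of the states of the target $\mathcal{A}^\mathrm{u}_\rdp$. By Theorem~\ref{th:abstraction-automata} the target's transition function realises the abstraction $\alpha$, and these transitions are policy-independent; hence the partial abstraction $\alpha_i = \tau_i^*$ induced by the safe part of hypothesis $i$ is a genuine restriction of $\alpha$, and the partial MDP $\mathcal{M}_i$ it induces is an exact sub-MDP of $\mathcal{M}_\rdp^\alpha$ on the safe state set $S'$.

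Next I would analyse a single phase. On the safe prefix of each episode (Lines~4--8) the agent plays the Markov-RL policy inside $\mathcal{M}_i$ restricted to $S'$, so the explore-or-exploit condition of Guarantees~\ref{guarantees:rl} applies to $\mathcal{M}_\rdp^\alpha$ with accuracy $\epsilon$ and per-phase failure probability $\delta' = \delta/(2\cdot N(\mathcal{A}^\mathrm{u}_\rdp))$. Its conclusion gives the case split that drives the proof: either the agent has found a policy $\pi$ that is $\epsilon$-optimal for the full induced MDP $\mathcal{M}_\rdp^\alpha$, in which case $\pi\alpha$ is $\epsilon$-optimal for $\rdp$ by the induced-MDP property recalled after Definition~1 [Hutter, 2016, Theorem~1], or a candidate state is reached. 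In the latter case the uniform suffix (Lines~9--12) supplies, from the candidate onward, a string drawn according to the uniform policy, which is exactly the distribution represented by the target $\mathcal{A}^\mathrm{u}_\rdp$; this is why the target is taken under $\upolicy$, and why freezing the statistics of a state upon promotion keeps the learner insensitive to the Markov-RL policy played on the safe prefix. By liveness, each candidate is promoted or merged after an expected $K(\mathcal{A}^\mathrm{u}_\rdp,\delta/2)$ such strings, so repeated escapes force a hypothesis change and end the phase.

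I would then assemble the counts. Boundedness caps the number of phases at $N(\mathcal{A}^\mathrm{u}_\rdp)$, and within a phase the explore-or-exploit budget $E(\mathcal{M}_\rdp^\alpha,\epsilon,\delta')$ together with the liveness count $K(\mathcal{A}^\mathrm{u}_\rdp,\delta/2)$ bounds the episodes needed either to terminate or to advance, yielding the stated $O\big(N \cdot \lceil E/K\rceil\big)$ episode bound after the appropriate amortisation. A union bound over the at most $N$ phases, each failing with probability at most $\delta'$, contributes at most $\delta/2$, which together with the $\delta/2$ from soundness gives overall success probability at least $1-\delta$. The per-episode computation cost is the sum of the costs of processing one episode in each module, i.e.\ $T_\mathrm{s}(\mathcal{A}^\mathrm{u}_\rdp,\delta/2,L_\rdp)+T_\mathrm{e}(\mathcal{M}_\rdp^\alpha,\epsilon,\delta',L_\rdp)$ in expectation over the episode length $L_\rdp$, and this multiplies the episode count to give the computation bound.

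The step I expect to be the main obstacle is the amortised episode count, namely justifying the $\lceil E/K\rceil$ factor: I must argue carefully how the explore-or-exploit guarantee, which on its face promises only a single candidate visit within $E$ episodes, combines with the per-candidate liveness requirement of $K$ strings and with the resets of the Markov agent at each abstraction change, so that the total does not blow up. A secondary subtlety is making the probability accounting valid despite the statistical dependence across phases---the hypothesis, the induced MDP, and hence the Markov-RL instance all change from phase to phase---which I would handle by conditioning on the soundness event and applying the explore-or-exploit guarantee afresh, with an independent failure budget $\delta'$, to each phase's MDP.
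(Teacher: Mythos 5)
Your proposal follows essentially the same route as the paper's own proof sketch: stages indexed by hypothesis automata (at most $N(\mathcal{A}^\mathrm{u}_\rdp)$ by boundedness), a per-stage explore-or-exploit dichotomy with budget $\delta'$ where exploitation yields $\pi\alpha$ $\epsilon$-optimal via the induced-MDP property and exploration feeds the liveness count $K$, a failure-probability split of $\delta/2$ for the single automata-learning run and $\delta'$ for each of the $N$ Markov-RL runs, and per-episode cost $T_\mathrm{s}+T_\mathrm{e}$. The amortisation step you flag as the main obstacle (justifying the $\lceil E/K\rceil$ factor under agent resets) is likewise left implicit in the paper's sketch, so your account is faithful to, and in places slightly more explicit than, the published argument.
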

\begin{proof}[Proof sketch]
  An execution of the algorithm can be split into stages, with one stage for
  each hypothesis automaton. By the boundedness condition, there are at most
  $N(\mathcal{A}^\mathrm{u}_\rdp)$ stages.
  Consider an arbitrary stage. Within
  $E(\mathcal{M}_\rdp^\alpha,\epsilon,\delta')$ episodes, the Markov agent
  either exploits or explores.
  If it exploits, i.e., it finds an $\epsilon$-optimal policy $\pi$ for
  the MDP induced by the current partial abstraction $\alpha_i$, then
  $\pi\alpha_i$ is $\epsilon$-optimal for $\rdp$.
  Otherwise, a candidate state is explored
  $K(\mathcal{A}^\mathrm{u}_\rdp,\delta/2)$ times, and hence it is promoted or
  merged. 
  The probability $\delta$ of failing is partitioned among the one run of
  the automata learning algorithm, and the $N(\mathcal{A}^\mathrm{u}_\rdp)$
  independent runs of the Markov RL algorithm.
  Regarding the computation time, note that the algorithm performs one
  iteration per episode, it operates on one episode in each iteration, calling
  the two modules and performing some other operations having a smaller cost.
\end{proof}

\begin{figure*}[t]
    \centering
    \begin{subfigure}[b]{0.24\textwidth}
        \centering
        \includegraphics[width=\textwidth]{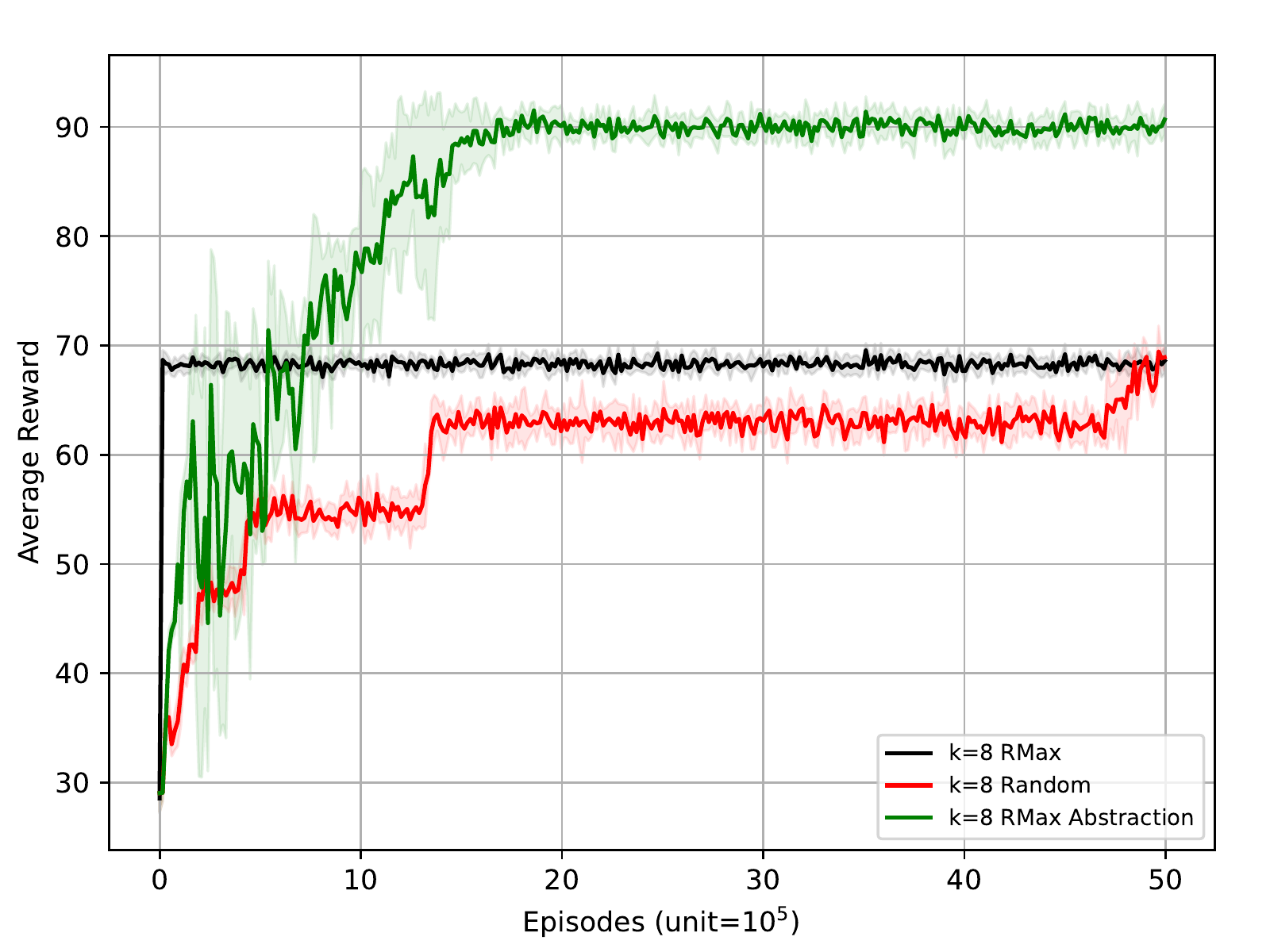}
        \caption{Reset-Rotating MAB}
        \label{fig:rotating_mab_v2}
    \end{subfigure}
    \begin{subfigure}[b]{0.24\textwidth}
        \centering
        \includegraphics[width=\textwidth]{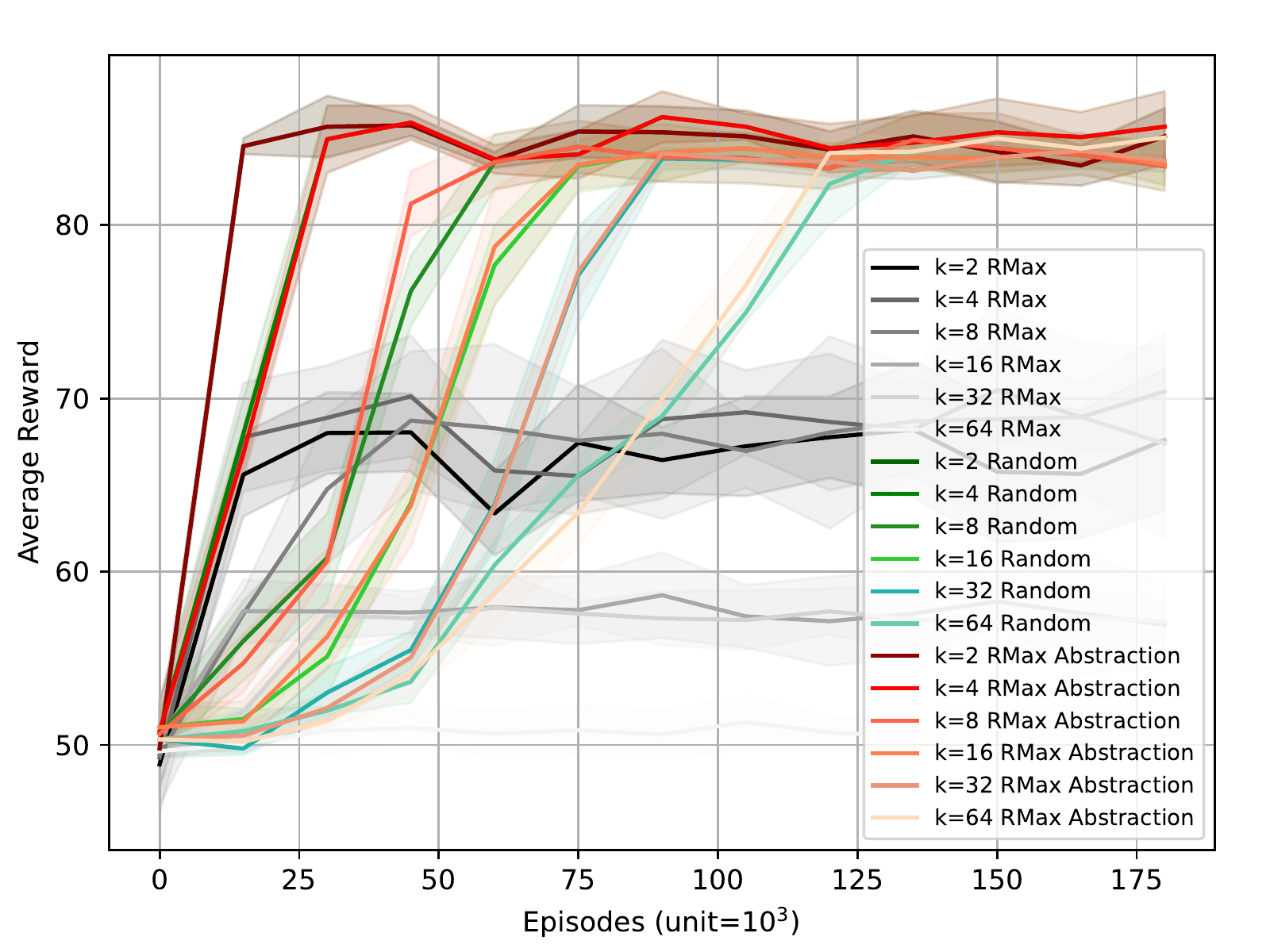}
        \caption{Enemy Corridor}
        \label{fig:enemy_corridor}
    \end{subfigure}
    \begin{subfigure}[b]{0.24\textwidth}
        \centering
        \includegraphics[width=\textwidth]{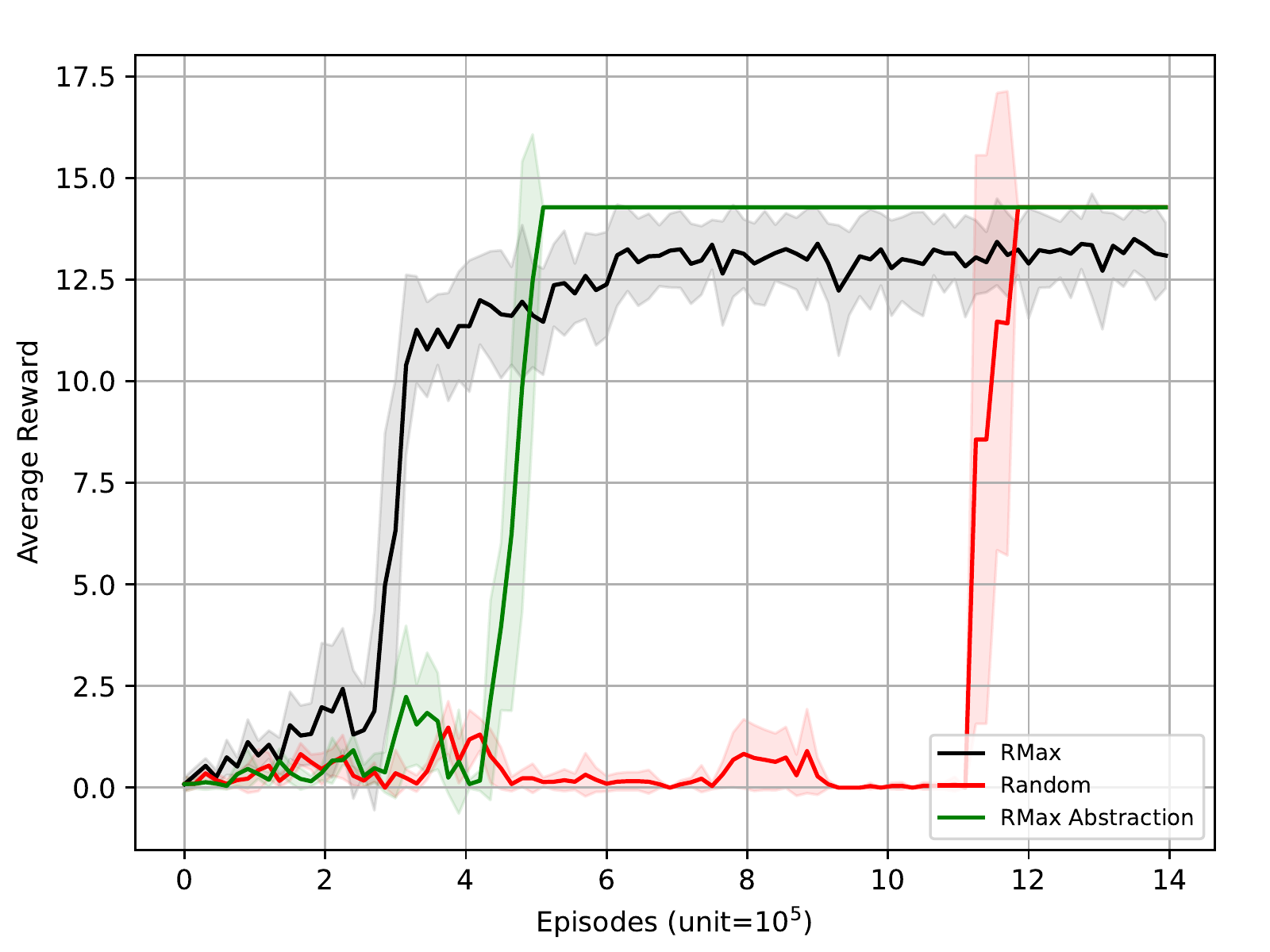}
        \caption{Flickering Grid}
        \label{fig:flickering_grid}
    \end{subfigure}
    \begin{subfigure}[b]{0.24\textwidth}
        \centering
        \includegraphics[width=\textwidth]{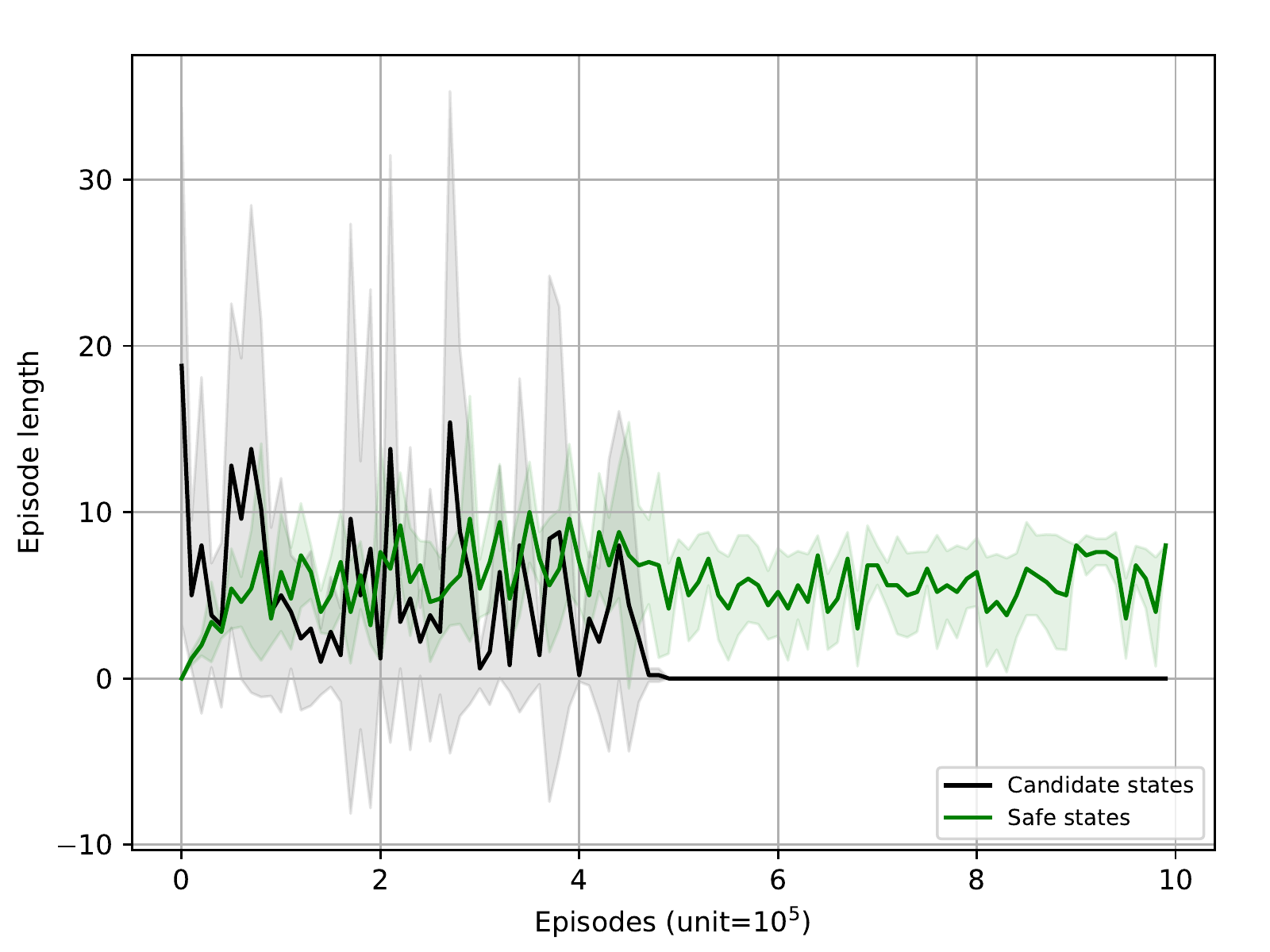}
        \caption{Safe vs.\ candidate states}
        \label{fig:safe_vs_candidate}
    \end{subfigure}
    \begin{subfigure}[b]{0.24\textwidth}
        \centering
        \includegraphics[width=\textwidth]{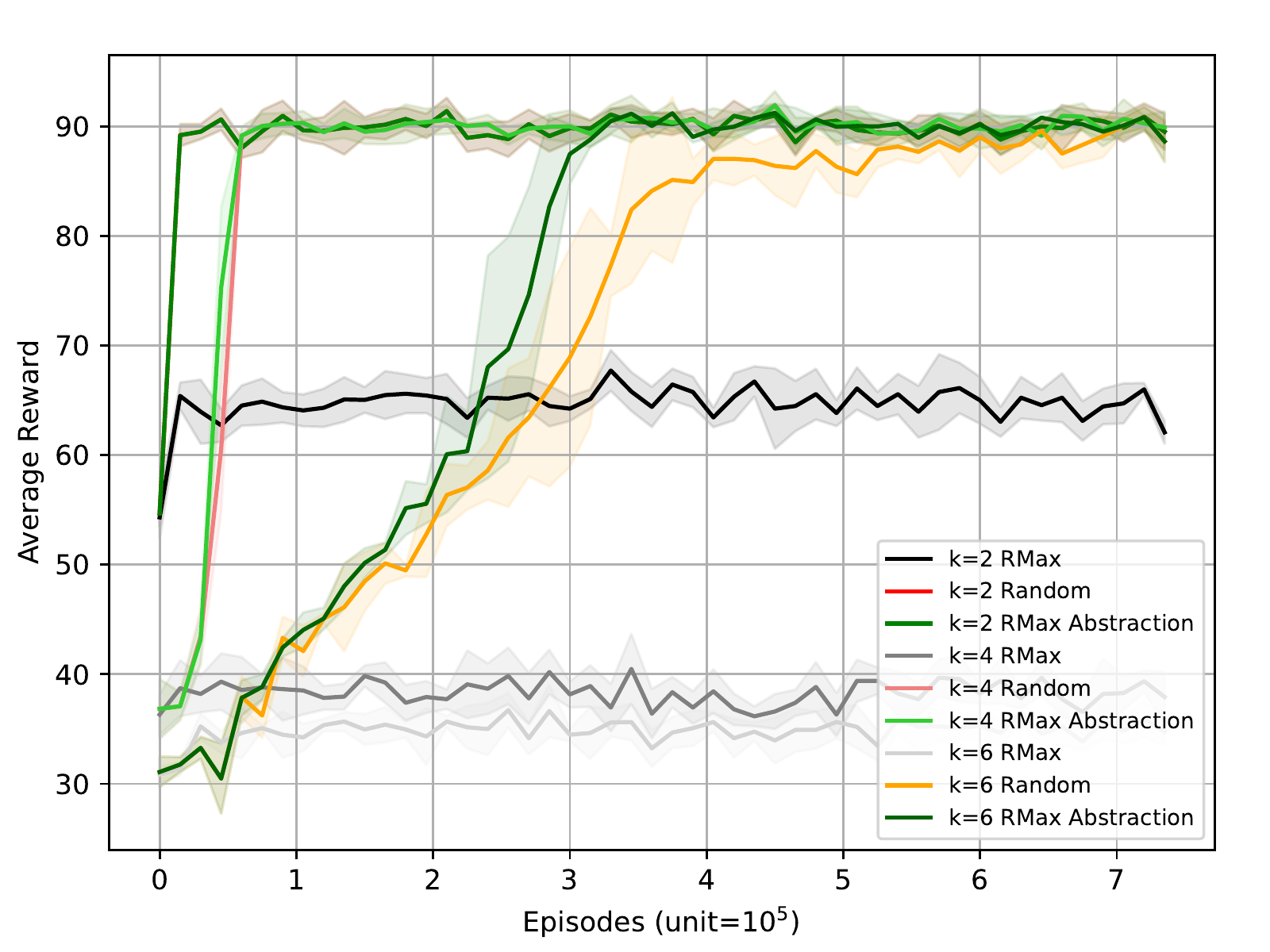}
        \caption{Rotating MAB}
        \label{fig:rotating_mab}
    \end{subfigure}
    \begin{subfigure}[b]{0.24\textwidth}
        \centering
        \includegraphics[width=\textwidth]{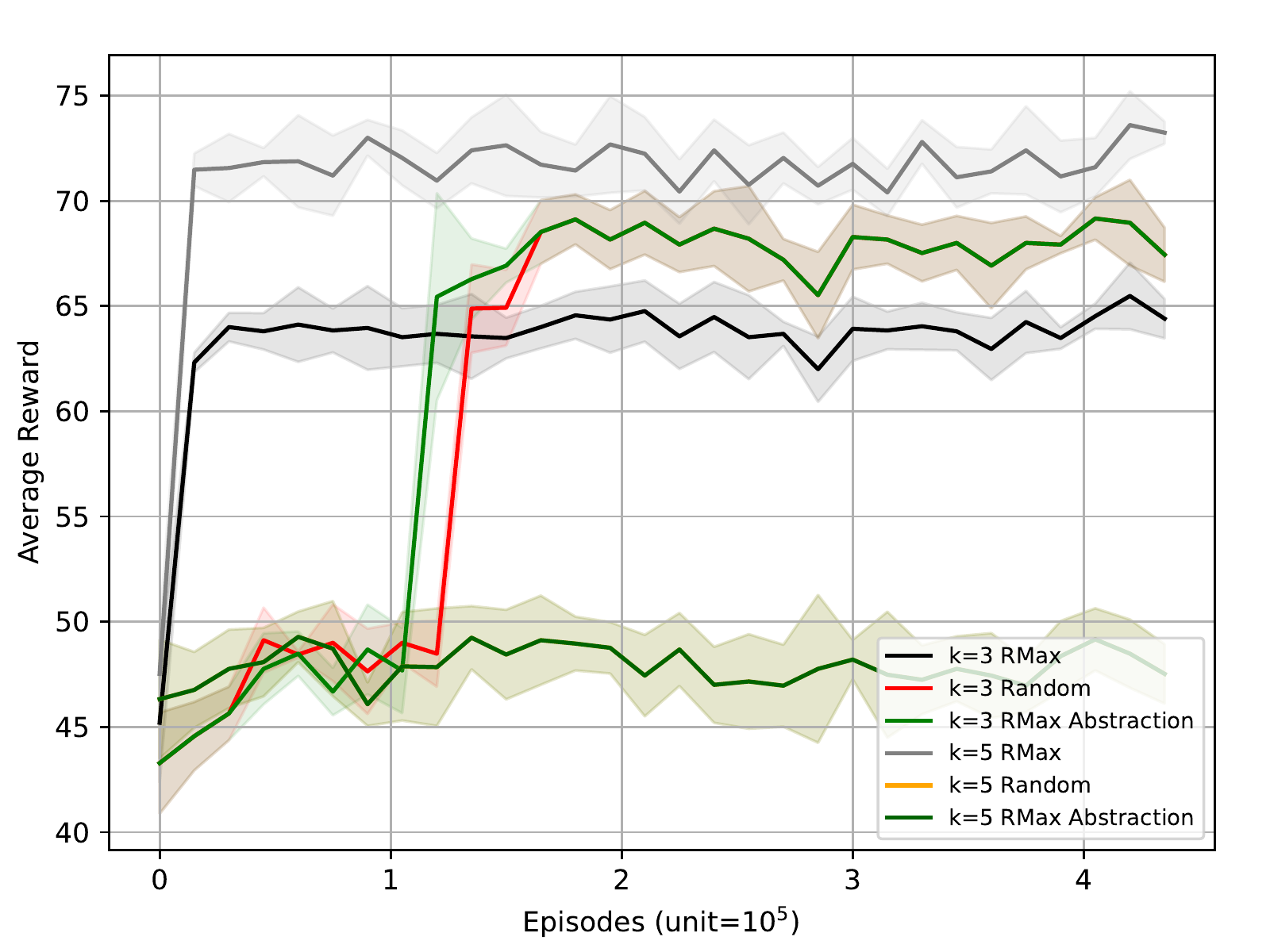}
        \caption{Malfunction MAB}
        \label{fig:malfunction_mab}
    \end{subfigure}
    \begin{subfigure}[b]{0.24\textwidth}
        \centering
        \includegraphics[width=\textwidth]{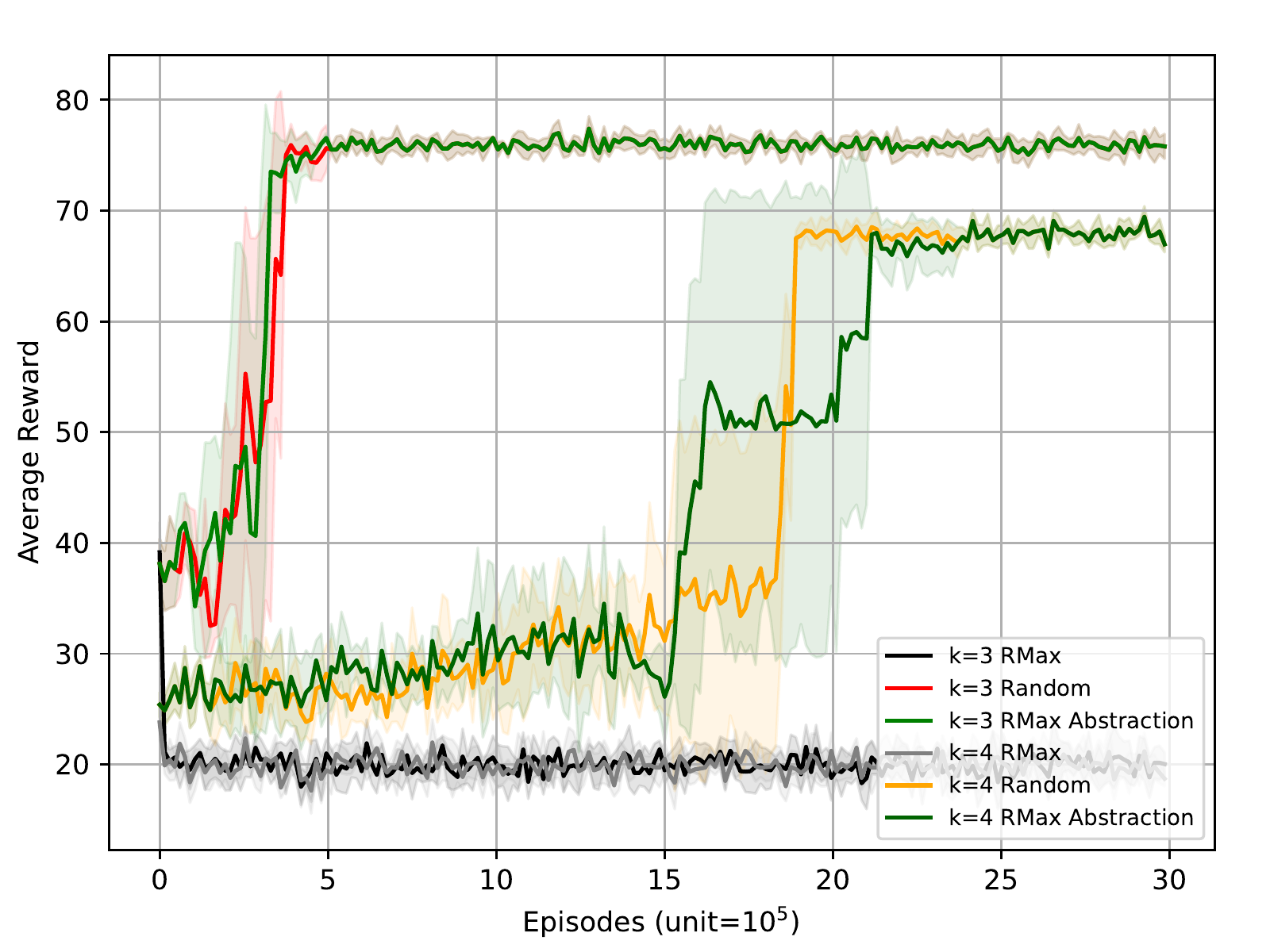}
        \caption{Cheat MAB}
        \label{fig:cheat_mab}
    \end{subfigure}
    \begin{subfigure}[b]{0.24\textwidth}
        \centering
        \includegraphics[width=\textwidth]{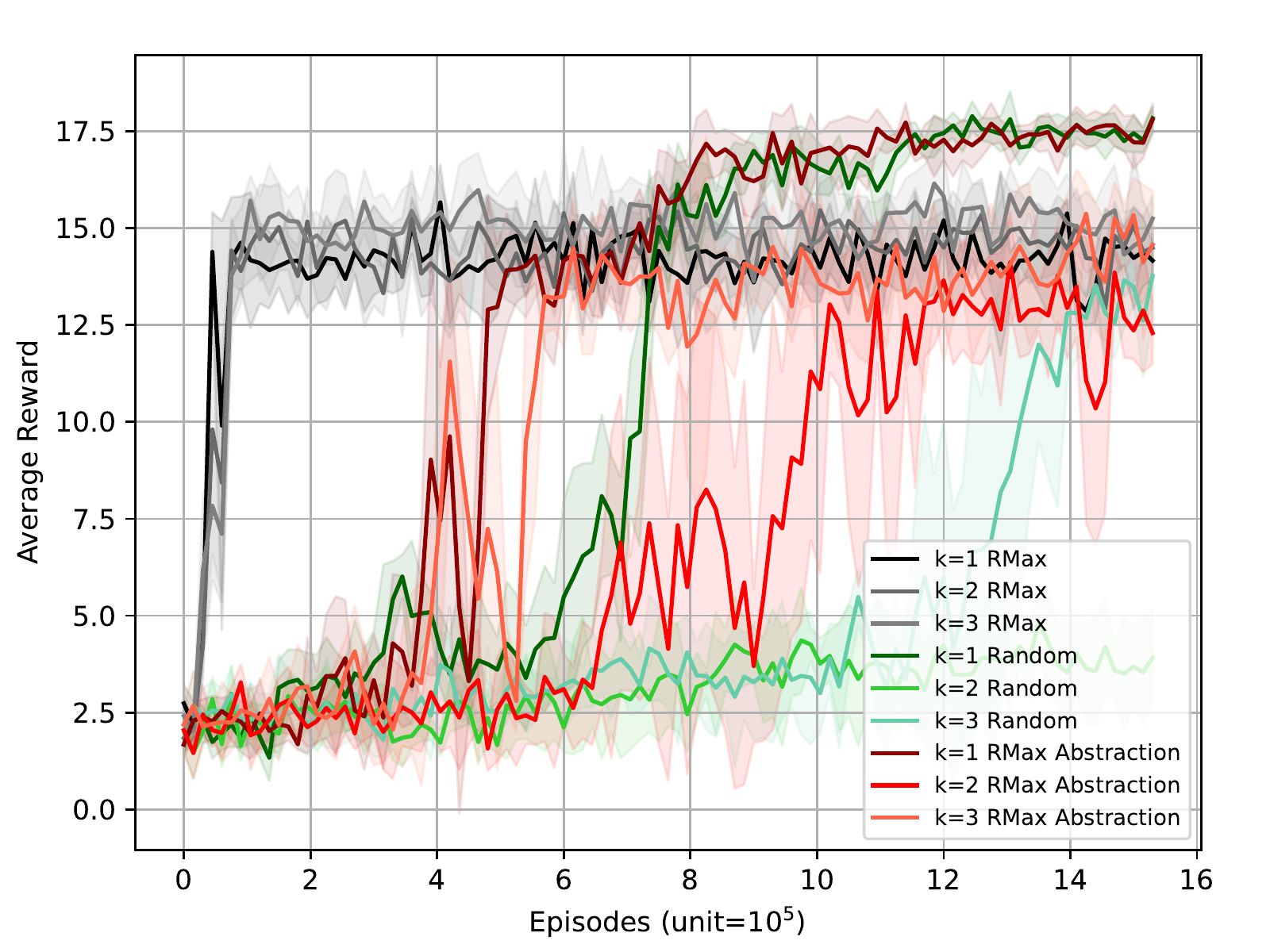}
        \caption{Rotating Maze}
        \label{fig:rotating_maze}
    \end{subfigure}
    \caption{Empirical evaluation of Markov Abstractions.}
    \label{fig:evaluation}
\end{figure*}

\section{Empirical Evaluation}

% \begin{itemize}
%     \item write eval section
%     \item update table and appendix
%     \item 
% \end{itemize}

% \begin{itemize}
%     \item Intro paragraph (we instantiate markov abstractions)
%     \item Experimental setup
%     \item Domains used (old and novel ones)
%     \item Evaluation method (nb episodes, evaluation policy)
%     \item General discussion of the plots
%     \begin{itemize}
%         \item Talk about scalability (mainly in corridor domain)
%         \item Talk about rmax exploration vs random exploration (specifically in rot. mab v2)
%         \item Talk about safe vs candidate states plot
%     \end{itemize}
%     \item Talk about domains that are impractical cause of delay and distinguishability 
%     (and that these limitations come from PDFA learning, alghough we use SOTA)
%     \item Talk about random sampling whihc amounts to comparing to previous IJCAI paper
%     \item Comparison with Sheila (tabu's neighbourhood) and brafman (refer to proof in prev. paper)
% \end{itemize}

% Intro paragraph (we instantiate markov abstractions)
We show an experimental evaluation of our approach.%
\footnote{The results shown in this paper are reproducible. Source code,
  instructions, and definitions of the experiments are available at:
  \href{https://github.com/whitemech/markov-abstractions-code-ijcai22}{github.com/whitemech/markov-abstractions-code-ijcai22}. 
Experiments were carried out on a server running Ubuntu 18.04.5 LTS, with 512GB RAM, 
and an 80 core Intel Xeon E5-2698 2.20GHz.
Each training run takes one core. The necessary amount of compute and time
depends mostly on the number of episodes set for training.}
We employ the state-of-the-art stream PDFA learning
algorithm~\cite{balle2014adaptively} and the classic RMax
algorithm~\cite{brafman2002rmax}.
They satisfy Guarantees~\ref{guarantees:automata} and~\ref{guarantees:rl}.
% \footnote{The results shown in this paper are reproducible, and the definitions of our experiments are provided 
% in the source code, along with the exact parameters and random seeds to reproduce the results, 
% as well as a virtual environment with all code requirements.  
% Our experiments were carried out on a server running Ubuntu 18.04.5 LTS, with 512GB RAM, 
% and 80 cores model Intel Xeon E5-2698 2.20GHz. 
% Each training run takes one core and the necessary amount of compute and time depends mostly on the number 
% of episodes set for training.}
%
% \item Domains used (old and novel ones)
We consider the domains from \cite{abadi2020learning}: Rotating MAB, Malfunction
MAB, Cheat MAB, and Rotating Maze;
a variant of Enemy Corridor \cite{ronca2021efficient};
and two novel domains: Reset-Rotating MAB, and Flickering Grid. 
% Detailed descriptions are in Appendix~\ref{sec:domains}.
Reset-Rotating MAB is a variant of Rotating MAB where failing to pull the
correct arm brings the agent back to the initial state. 
Flickering Grid is a basic grid with an initial and a goal position, but where
at some random steps the agent is unable to observe its position.
All domains are parametrised by $k$, which makes the
domains more complex as it is assigned larger values.

% \item Evaluation method (nb episodes, evaluation policy)
% \item Talk about random sampling which amounts to comparing to previous IJCAI paper
Figure~\ref{fig:evaluation} shows our performance evaluation. 
Plots include three different approaches. 
First, our approach referred to as \emph{RMax Abstraction}, i.e., the RMax
Markov agent provided with partial Markov abstractions as they are incrementally
learned.
Second, a \emph{Random Sampling} approach, equivalent to what is proposed in
\cite{ronca2021efficient}, that always explores at random.
Third, the RMax agent as a baseline for the performance of a purely Markov
agent, that does not rely on Markov abstractions.
Results for each approach are averaged over 5 trainings and show the standard deviation. 
At each training, the agent is evaluated at every 15k training episodes.
Each evaluation is an average over 50 episodes, where for each episode we measure the accumulated reward
divided by the number of steps taken. 
Notice that the RMax Abstraction agent takes actions uniformly at random on histories where the Markov abstraction is not yet defined,
otherwise taking actions greedily according to the value function. 

The results for Reset-Rotating MAB (Figure~\ref{fig:rotating_mab_v2}) show the
advantage of the exploration strategy of our approach, compared to Random
Sampling. In fact, in this domain, random exploration does not allow for
exploring states that are far from the initial state.
The results for Enemy Corridor (Figure~\ref{fig:enemy_corridor})
show that our approach scales with the domain size. Namely, in Enemy Corridor
we are able to increase the corridor size $k$
up to $64$.
The results for Flickering Grid (Figure~\ref{fig:flickering_grid}) show that our
approach works in a domain with partial observability, that is natural to model
as a POMDP. This is in line with our Theorem~\ref{theorem:pomdp}. The
Markov abstraction for Flickering Grid maps histories to the current cell;
intuitively, the agent learns to compute its position even if sometimes it is
not explicitly provided by the environment.
For this domain, Figure~\ref{fig:safe_vs_candidate} shows the number of steps
the agent is in a safe state against when it is not.
The figure exemplifies the incremental process of learning a Markov abstraction,
as the time spent by the agent out of the safe region is high during the first
episodes, and it decreases as more histories are collected and more safe states
get to be learned by the stream PDFA learning algorithm;
we see the agent achieves optimal behaviour around $5 \cdot 10^5$ episodes, when 
it no longer encounters candidate states, since all relevant states have been
learned.

% \item General discussion of the plots
% \item Talk about domains that are impractical cause of delay and distinguishability 
% (and that these limitations come from PDFA learning, alghough we use SOTA)
In the domains from \cite{abadi2020learning}
(Figures~\ref{fig:rotating_mab}-\ref{fig:rotating_maze}) our approach has
poor performance overall. We are able to show convergence in all domains except
Rotating Maze with $k = 2,3$ and Malfunction MAB with $k = 5$. However, it
requires a large number of episodes before achieving a close-to-optimal average
reward. 
We observe that all these domains have a low distinguishability between the
probability distribution of the states. The performance of PDFA-learning is
greatly dependent on such parameter, cf.\ \cite{balle2013pdfa}.
Furthermore, in these domains, the value of distinguishability decreases with
$k$.
% in order to declare whether two states represent the same distribution.
% Although the stream PDFA learning algorithm has guarantees to find the correct model, lower distinguishability 
% values require a higher amount of samples in order to distinguish states correctly, since the chance of 
% sampling distinguishing histories decreases exponentially at each step.
On the contrary, in Reset-Rotating MAB, Enemy Corridor, and Flickering Grid,
distinguishability is high and independent of $k$.
Therefore, we conclude that a low distinguishability is the source of poor
performance. In particular, it causes the automata learning module to require a
large number of samples.
We compare our approach against two related approaches.
First, we compare against \cite{abadi2020learning} that combines
deterministic finite automata learning with history clustering to build a
Mealy Machine that represents the underlying decision process, 
while employing MCTS to compute policies for the learned models. 
Comparing to the results published in \cite{abadi2020learning} for Rotating MAB,
Malfuction MAB, and Rotating Maze, our approach has better performance in
Rotating MAB, and worse performance in the other domains. 
Second, we compare against \cite{toroicarte2019learning}, also based on automata
learning.
Our approach outperforms theirs in all the domains mentioned in this paper, and
their approach outperforms ours in all their domains. Specifically, their
algorithm does not converge in our domains, and our algorithm does not converge
in theirs. Our poor
performance is explained by the fact that all domains in
\cite{toroicarte2019learning} have a low distinguishability, unfavourable to us.
On the other hand, their poor performance in our domains might be due to their
reliance on local search methods.

\section{Conclusion and Future Work}

We presented an approach for learning and solving non-Markov decision processes,
that combines standard RL with automata learning in a modular manner.
Our theoretical results show that, for NMDPs that admit a Markov 
abstraction, $\epsilon$-optimal policies can be computed  
with PAC guarantees, and that its dynamics can be accurately represented by an
automaton.
The experimental evaluation shows the feasibility of the approach.
The comparison with the random-exploration baseline shows the
advantage of a smarter exploration based on the automaton as it is still being
learned, which is an important aspect introduced in this paper.
Some of the domains illustrate the difficulty of learning when
distinguishability is low---an issue of the underlying automata learning
algorithm. At the same time, other domains show good scalability of the
overall approach when distinguishability stays high.

The difficulties encountered by the PDFA-learning algorithm in domains with
low distinguishability suggest that the existing
PDFA-learning techniques should be further developed in order to be effective in
RL. We see it as an interesting direction for future work.
A second direction is to make our approach less reliant on the
automata learning component. Specifically, an agent could operate directly on
the histories for which the current Markov abstraction is yet undefined.

\section*{Acknowledgments}
This work is partially supported by the ERC Advanced Grant WhiteMech (No.\
834228), by the EU ICT-48 2020 project TAILOR (No.\ 952215), by the PRIN project
RIPER (No.\ 20203FFYLK), and by the JPMorgan AI Faculty Research Award
``Resilience-based Generalized Planning and Strategic Reasoning''.

\bibliographystyle{named}
\bibliography{bibliography,journal-abbreviations}

\begin{thebibliography}{}

\bibitem[\protect\citeauthoryear{Abadi and Brafman}{2020}]{abadi2020learning}
Eden Abadi and Ronen~I. Brafman.
\newblock Learning and solving regular decision processes.
\newblock In {\em IJCAI}, 2020.

\bibitem[\protect\citeauthoryear{Bacchus \bgroup \em et al.\egroup
  }{1996}]{bacchus1996rewarding}
Fahiem Bacchus, Craig Boutilier, and Adam~J. Grove.
\newblock Rewarding behaviors.
\newblock In {\em AAAI}, 1996.

\bibitem[\protect\citeauthoryear{Balle \bgroup \em et al.\egroup
  }{2013}]{balle2013pdfa}
Borja Balle, Jorge Castro, and Ricard Gavald{\`{a}}.
\newblock Learning probabilistic automata: {A} study in state
  distinguishability.
\newblock {\em Theor. Comp. Sci.}, 2013.

\bibitem[\protect\citeauthoryear{Balle \bgroup \em et al.\egroup
  }{2014}]{balle2014adaptively}
Borja Balle, Jorge Castro, and Ricard Gavald{\`{a}}.
\newblock Adaptively learning probabilistic deterministic automata from data
  streams.
\newblock {\em Mach. Learn.}, 2014.

\bibitem[\protect\citeauthoryear{Bellman}{1957}]{bellman1957markovian}
Richard Bellman.
\newblock A markovian decision process.
\newblock {\em J. Math. Mech.}, 1957.

\bibitem[\protect\citeauthoryear{Brafman and {De
  Giacomo}}{2019}]{brafman2019rdp}
Ronen~I. Brafman and Giuseppe {De Giacomo}.
\newblock Regular decision processes: A model for non-markovian domains.
\newblock In {\em IJCAI}, 2019.

\bibitem[\protect\citeauthoryear{Brafman and
  Tennenholtz}{2002}]{brafman2002rmax}
Ronen~I. Brafman and Moshe Tennenholtz.
\newblock {R-Max}: {A} general polynomial time algorithm for near-optimal
  reinforcement learning.
\newblock {\em JMLR}, 2002.

\bibitem[\protect\citeauthoryear{Brafman \bgroup \em et al.\egroup
  }{2018}]{brafman2018ltlf}
Ronen~I. Brafman, Giuseppe {De Giacomo}, and Fabio Patrizi.
\newblock {LTLf/LDLf} non-{M}arkovian rewards.
\newblock In {\em AAAI}, 2018.

\bibitem[\protect\citeauthoryear{Clark and Thollard}{2004}]{clark2004pac}
Alexander Clark and Franck Thollard.
\newblock {PAC}-learnability of probabilistic deterministic finite state
  automata.
\newblock {\em J. Mach. Learn. Res.}, 2004.

\bibitem[\protect\citeauthoryear{{De Giacomo} \bgroup \em et al.\egroup
  }{2019}]{degiacomo2019bolts}
Giuseppe {De Giacomo}, Luca Iocchi, Marco Favorito, and Fabio Patrizi.
\newblock Foundations for restraining bolts: Reinforcement learning with
  {LTLf/LDLf} restraining specifications.
\newblock In {\em ICAPS}, 2019.

\bibitem[\protect\citeauthoryear{Gaon and Brafman}{2020}]{gaon2019rl}
Maor Gaon and Ronen~I. Brafman.
\newblock Reinforcement learning with non-markovian rewards.
\newblock In {\em {AAAI}}, 2020.

\bibitem[\protect\citeauthoryear{Hutter}{2009}]{hutter2009feature}
Marcus Hutter.
\newblock Feature reinforcement learning: Part {I}. {U}nstructured {MDPs}.
\newblock {\em J. Artif. Gen. Intell.}, 2009.

\bibitem[\protect\citeauthoryear{Hutter}{2016}]{hutter2016extreme}
Marcus Hutter.
\newblock Extreme state aggregation beyond markov decision processes.
\newblock {\em Theor. Comp. Sci.}, 2016.

\bibitem[\protect\citeauthoryear{Icarte \bgroup \em et al.\egroup
  }{2018}]{toroicarte2018reward}
Rodrigo~Toro Icarte, Toryn~Q. Klassen, Richard~Anthony Valenzano, and Sheila~A.
  McIlraith.
\newblock Using reward machines for high-level task specification and
  decomposition in reinforcement learning.
\newblock In {\em ICML}, 2018.

\bibitem[\protect\citeauthoryear{Icarte \bgroup \em et al.\egroup
  }{2019}]{toroicarte2019learning}
Rodrigo~Toro Icarte, Ethan Waldie, Toryn~Q. Klassen, Richard~Anthony Valenzano,
  Margarita~P. Castro, and Sheila~A. McIlraith.
\newblock Learning reward machines for partially observable reinforcement
  learning.
\newblock In {\em NeurIPS}, 2019.

\bibitem[\protect\citeauthoryear{Kaelbling \bgroup \em et al.\egroup
  }{1998}]{kaelbling1998pomdp}
Leslie~Pack Kaelbling, Michael~L. Littman, and Anthony~R. Cassandra.
\newblock Planning and acting in partially observable stochastic domains.
\newblock {\em Artif. Intell.}, 1998.

\bibitem[\protect\citeauthoryear{Kearns and Singh}{2002}]{kearns2002near}
Michael~J. Kearns and Satinder~P. Singh.
\newblock Near-optimal reinforcement learning in polynomial time.
\newblock {\em Mach. Learn.}, 2002.

\bibitem[\protect\citeauthoryear{Kearns and
  Vazirani}{1994}]{kearns1994introduction}
Michael~J. Kearns and Umesh~V. Vazirani.
\newblock {\em An Introduction to Computational Learning Theory}.
\newblock {MIT} Press, 1994.

\bibitem[\protect\citeauthoryear{Lattimore \bgroup \em et al.\egroup
  }{2013}]{lattimore2013sample}
Tor Lattimore, Marcus Hutter, and Peter Sunehag.
\newblock The sample-complexity of general reinforcement learning.
\newblock In {\em {ICML}}, 2013.

\bibitem[\protect\citeauthoryear{Maillard \bgroup \em et al.\egroup
  }{2011}]{maillard2011selecting}
Odalric{-}Ambrym Maillard, R{\'{e}}mi Munos, and Daniil Ryabko.
\newblock Selecting the state-representation in reinforcement learning.
\newblock In {\em NeurIPS}, 2011.

\bibitem[\protect\citeauthoryear{Majeed and Hutter}{2018}]{majeed2018qlearning}
Sultan~Javed Majeed and Marcus Hutter.
\newblock On {Q}-learning convergence for non-markov decision processes.
\newblock In {\em {IJCAI}}, 2018.

\bibitem[\protect\citeauthoryear{Nguyen \bgroup \em et al.\egroup
  }{2013}]{nguyen2013competing}
Phuong Nguyen, Odalric{-}Ambrym Maillard, Daniil Ryabko, and Ronald Ortner.
\newblock Competing with an infinite set of models in reinforcement learning.
\newblock In {\em {AISTATS}}, 2013.

\bibitem[\protect\citeauthoryear{Palmer and Goldberg}{2007}]{palmer2007pac}
Nick Palmer and Paul~W. Goldberg.
\newblock {PAC}-learnability of probabilistic deterministic finite state
  automata in terms of variation distance.
\newblock {\em Theor. Comp. Sci.}, 2007.

\bibitem[\protect\citeauthoryear{Puterman}{1994}]{puterman1994markov}
Martin~L. Puterman.
\newblock {\em {M}arkov Decision Processes: Discrete Stochastic Dynamic
  Programming}.
\newblock Wiley, 1994.

\bibitem[\protect\citeauthoryear{Ron \bgroup \em et al.\egroup
  }{1998}]{ron1998learnability}
Dana Ron, Yoram Singer, and Naftali Tishby.
\newblock On the learnability and usage of acyclic probabilistic finite
  automata.
\newblock {\em J. Comp. Syst. Sci.}, 1998.

\bibitem[\protect\citeauthoryear{Ronca and {De
  Giacomo}}{2021}]{ronca2021efficient}
Alessandro Ronca and Giuseppe {De Giacomo}.
\newblock Efficient {PAC} reinforcement learning in regular decision processes.
\newblock In {\em {IJCAI}}, 2021.

\bibitem[\protect\citeauthoryear{Sutton and Barto}{2018}]{suttonbarto}
Richard~S. Sutton and Andrew~G. Barto.
\newblock {\em Reinforcement learning: An introduction}.
\newblock Adaptive computation and machine learning. {MIT} Press, 2018.

\bibitem[\protect\citeauthoryear{Veness \bgroup \em et al.\egroup
  }{2011}]{veness2011montecarlo}
Joel Veness, Kee~Siong Ng, Marcus Hutter, William T.~B. Uther, and David
  Silver.
\newblock A {M}onte-{C}arlo {AIXI} approximation.
\newblock {\em J. Artif. Intell. Res.}, 2011.

\bibitem[\protect\citeauthoryear{Xu \bgroup \em et al.\egroup
  }{2020}]{xu2020jirp}
Zhe Xu, Ivan Gavran, Yousef Ahmad, Rupak Majumdar, Daniel Neider, Ufuk Topcu,
  and Bo~Wu.
\newblock Joint inference of reward machines and policies for reinforcement
  learning.
\newblock In {\em ICAPS}, 2020.

\end{thebibliography}

  \newpage
  \onecolumn
  \appendix
  % \section*{Appendix}
  % \input{sections/appendix-introduction.tex}
  % \input{sections/appendix-delay.tex}
  % \input{sections/appendix-algorithms.tex}
  % \newpage

\clearpage
\section{Proofs}
\label{sec:proofs}

\thabstractionautomata*
\begin{proof}
  The start state is $s_0 = \alpha(\varepsilon)$.
  The transition function is defined as $\tau(s,aor) = \alpha(h_saor)$ where
  $h_s$ is an arbitrarily chosen history such that $\alpha(h_s) = s$.
  Clearly $\tau^* = \alpha$.
  Then, the probability function is defined as
  $\lambda(s,aor) = \pi(a|s) \cdot \dynfunc(or|h_sa)$ and
  $\lambda(s,a\stopaction) = \pi(a|s) \cdot \dynfunc(\stopaction|h_sa)$
  Then, the automaton represents the probability
  $\atm(aor e | h) = \lambda(\tau(h),aor) \cdot \atm(e | haor)$ 
  with base case $\atm(a\terminationsymbol | h) = 
  \lambda(\tau(h), a\terminationsymbol)$.
  We show by induction that the automaton represents $\dynfunc_{\pi\alpha}$.

  In the base case we have 
  \begin{align}
    \label{eq:thabstractionautomata-1}
    & \atm(a\terminationsymbol | h) 
    \\
    \label{eq:thabstractionautomata-2}
    & = \lambda(\tau(h), a\terminationsymbol) 
    \\
    \label{eq:thabstractionautomata-3}
    & = \pi(a|\tau(h)) \cdot \dynfunc(\stopaction|h_{\tau(h)},a) 
    \\
    \label{eq:thabstractionautomata-4}
    & = \pi(a|\tau(h)) \cdot \dynfunc(\stopaction|h,a) 
    \\
    \label{eq:thabstractionautomata-5}
    & = \dynfunc_{\pi\alpha}(a\stopaction|h).
  \end{align}
  Note that \eqref{eq:thabstractionautomata-2} and
  \eqref{eq:thabstractionautomata-3} hold by the definition, 
  \eqref{eq:thabstractionautomata-4} by the definition of Markov abstraction,
  and \eqref{eq:thabstractionautomata-4} holds by the definition of dynamics.
  The inductive case follows similarly, 
  \begin{align}
    \label{eq:thabstractionautomata-6}
    & \atm(aor e | h) 
    \\
    \label{eq:thabstractionautomata-7}
    & = \lambda(\tau(h),aor) \cdot \atm(e | haor)
    \\
    \label{eq:thabstractionautomata-8}
    & = \pi(a|\tau(h)) \cdot \dynfunc(or|h_{\tau(h)},a) \cdot \atm(e | haor)
    \\
    \label{eq:thabstractionautomata-9}
    & = \pi(a|\tau(h)) \cdot \dynfunc(or|h,a) \cdot \atm(e | haor)
    \\
    \label{eq:thabstractionautomata-10}
    & = \pi(a|\tau(h)) \cdot \dynfunc(or|h,a) \cdot \dynfunc_{\pi\alpha}(e | haor)
    \\
    \label{eq:thabstractionautomata-11}
    & = \dynfunc_{\pi\alpha}(aore | h).
  \end{align}
  The only difference with the previous case is 
  \eqref{eq:thabstractionautomata-10} where we apply the inductive hypothesis.
\end{proof}

\clearpage
\section{Domains}
\label{sec:domains}

We provide a description of each domain used in the experimental evaluation.

\paragraph{Rotating MAB.} 
This domain was introduced in \cite{abadi2020learning}.
It is a MAB with $k$ arms where reward probabilities depend
on the history of past rewards: they are shifted ($+1 \operatorname{mod} k$)
every time the agent obtains a reward.The optimal behaviour is not only to pull the arm that has the highest reward probability, but to 
consider the shifted probabilities to the next arm once the agent is rewarded, and therefore pull the 
next arm.
Episodes in this domain are set to 10 steps.

\paragraph{Reset-Rotating MAB.}
Similar to the definition above, however reward probabilities are reset to their 
initial state every time the agent does not get rewarded.
Episodes in this domain are set to 10 steps.

\paragraph{Malfunction MAB.} 
This domain was introduced in \cite{abadi2020learning}.
It is a MAB in which one arm has the highest probability of reward, but yields reward zero for one
turn every time it is pulled $k$ times.
Once this arm is pulled $k$ times, it has zero probability of reward in the next step.
The optimal behaviour in this domain is not only to pull the arm with highest reward probability, 
but to take into consideration the number of times that the arm was pulled, such that once the arm is broken 
the optimal arm to pull in the next step is any other arm with nonzero probability of reward.
Episodes in this domain are set to 10 steps.

\paragraph{Cheat MAB.} 
This domain was introduced in \cite{abadi2020learning}.
It is a standard MAB. However, there is a \emph{cheat} sequence of $k$ actions
that, once performed, allows maximum reward at every subsequent step.
The cheat is a specific sequence $[a_{i}, a_{i+1}, ..., a_{k}]$ of arms that have to be pulled.
Observations in this domain are the arms pulled by the agent.
The optimal behaviour is to perform the actions that compose the cheat, and 
thereafter any action returns maximum reward. 
Episodes in this domain are set to 10 steps.

\paragraph{Rotating Maze.} 
This domain was introduced in \cite{abadi2020learning}.
It is a grid domain with a fixed goal position that is $5$ steps away from the
initial position.
The agent is able to move in any direction (up/left/down/right) and the actions
have success probability $0.9$, with the agent moving into the opposite
direction when an action fails.
Every $k$ actions, the orientation of the agent is changed by $90$ degrees
counter-clockwise. 
Observations in this domain are limited to the coordinates of the grid. 
Episodes in this domain are set to 15 steps.

\paragraph{Flickering Grid.}
An 8x8 grid domain ($k=8$) with goal cell $(3,4)$, where at each step the agent observes
a flicker (i.e. a blank observation) with $0.2$ probability.
Episodes in this domain are set to 15 steps. 

\paragraph{Enemy Corridor.}
This is an adapted version of the domain introduced in \cite{ronca2021efficient}.
It is a domain in which the agent has to cross a $2 \times k$ grid while avoiding enemies.
Every enemy guards a two-cell column, and it is in either one of the two cells according
to a certain probability. Such probabilities are swapped every time the agent hits an enemy.
Furthermore, once the agent crosses half of the corridor, the enemies change their set of 
probabilities and guard cells according to a different strategy than the first half of the corridor.
Episodes in this domain are set according to the corridor size (check the source code for exact numbers).

  \newpage
  % \input{sections/experimental-details.tex}
  % \input{sections/notes.tex}

% \newpage
% \onecolumn
% \input{sections/checklist.tex}

\end{document}